\title{Causality-Inspired Robustness for Nonlinear Models via Representation
Learning}
\author{Marin Sola, Peter B\"uhlmann, Xinwei Shen}
\date{}
\newcommand{\Ex}{\mathbb{E}}
\newcommand{\I}{\textbf{I}}
\newcommand{\Bb}{\textbf{B}}
\newcommand{\Pp}{\mathbb{P}}
\newcommand{\supv}{\sup_{v\in C^{\gamma}}\mathbb{E}}
\newcommand{\Xx}{\mathcal{X}}
\newcommand{\Le}{\mathbb{L}^2(\mathcal{X})}
\newcommand{\R}{\mathbb{R}}
\newcommand{\E}{\mathcal{E}}
\DeclareMathOperator*{\cov}{cov}
\DeclareMathOperator*{\argmin}{arg\,min}
\newcolumntype{A}{>{\centering\arraybackslash}m{0.12\columnwidth}}
\newcolumntype{B}{>{\centering\arraybackslash}m{0.4\columnwidth}}
\theoremstyle{plain}
\newtheorem{theorem}{Theorem}
\newtheorem{proposition}[theorem]{Proposition}
\newtheorem{lemma}[theorem]{Lemma}
\newtheorem{corollary}[theorem]{Corollary}
\newtheorem{assumption}{Assumption}
\newtheorem{definition}{Definition}
\theoremstyle{remark}
\newtheorem{remark}{Remark}
\begin{document}
\maketitle

\newcommand{\conditionaland}{\ifthenelse{\boolean{twocolumnmode}}{&}{}}
\newcommand{\conditionalnextrow}{\ifthenelse{\boolean{twocolumnmode}}{\\}{}}
\newcommand{\conditionalwidth}{\ifthenelse{\booolean{twocolumnmode}}{\columnwidth}{\textwidth}}
\newcommand{\conditionalheiht}{\ifthenelse{\booolean{twocolumnmode}}{}{}}

\begin{abstract}

Distributional robustness is a central goal of prediction algorithms due to the prevalent distribution shifts in real-world data. The prediction model aims to minimize the worst-case risk among a class of distributions, a.k.a., an uncertainty set. Causality provides a modeling framework with a rigorous robustness guarantee in the above sense, where the uncertainty set is data-driven rather than pre-specified as in traditional distributional robustness optimization. However, current causality-inspired robustness methods possess finite-radius robustness guarantees only in the linear settings, where the causal relationships among the covariates and the response are linear. In this work, we propose a nonlinear method under a causal framework by incorporating recent developments in identifiable representation learning and establish a distributional robustness guarantee. To our best knowledge, this is the first causality-inspired robustness method with such a finite-radius robustness guarantee in nonlinear settings. Empirical validation of the theoretical findings is conducted on both synthetic data and real-world single-cell data, also illustrating that finite-radius robustness is crucial. 

\end{abstract}

\section{Introduction}
In real-world applications, data distributions often shift between training and deployment environments, leading to degraded performance of machine learning (ML) models. These shifts can arise from changes in data collection methods, environmental conditions, or adversarial perturbations. Distributional robustness addresses this challenge by ensuring that models perform well across a range of possible distributions, rather than just the training distribution. This is particularly critical in high-stakes domains such as healthcare, finance, and autonomous systems, where unreliable predictions can have severe consequences. By focusing on robustness, we aim to build models that generalize reliably under some distribution shifts.
The goal of distributional robustness, specifically, is to optimize models for the worst-case scenario within a predefined set of possible distributions, known as the uncertainty set. This approach contrasts with traditional empirical risk minimization (ERM), which focuses solely on average performance on training data. By minimizing the worst-case risk among an uncertainty set, distributionally robust models are better equipped to handle unseen distributional shifts. The uncertainty set can be defined in various ways, such as through statistical distances \citep{esfahani2017datadrivendistributionallyrobustoptimization}, moment constraints \citep{wiesemann2014distributionally, bertsimas2014datadrivenrobustoptimization, Hanasusanto_2018}, or causal assumptions \citep{rojascarulla2018invariantmodelscausaltransfer, rothenhäusler2020anchor}. This flexibility allows the framework to be tailored to specific application domains and types of distribution shifts. 

Causal models offer a principled way to define uncertainty sets based on the underlying data-generating process, rather than relying on postulated distances. Based on causal relationships, these models typically try to identify invariant features that remain stable in different environments \citep{peters2015causalinferenceusinginvariant, louizos2017causaleffectinferencedeep, pfister2021stabilizingvariableselectionregression}, providing a natural foundation for robustness. This data-driven approach avoids the need for ad hoc definitions of uncertainty sets, which may not capture the true nature of distributional shifts. Furthermore, causal frameworks enable the incorporation of domain knowledge, enhancing the interpretability and reliability of the resulting models. 
 
Most causality‐inspired robustness methods enforce invariance constraints, for instance, approaches such as \cite{arjovsky2020invariantriskminimization, christiansen2021causalframeworkdistributiongeneralization} aim to ensure that predictions remain invariant under arbitrarily strong perturbations or distribution shifts. In contrast, methods like anchor regression \citep{rothenhäusler2020anchor} and DRIG \citep{shen2023causalityoriented} offer \emph{finite-radius robustness} guarantees for an uncertainty set of finite-strength perturbations. However, existing methods that provide finite-radius guarantees have been confined to settings where the relationships among variables are linear, restricting their applicability to domains such as certain economic or physical systems. Yet, many modern applications, including image recognition and genomics, exhibit highly nonlinear interactions, and extending finite-radius robustness guarantees to such settings remains an open challenge.

To our best knowledge, we make the first attempt to develop a causality-inspired robustness method with a finite-radius robustness guarantee in nonlinear settings. This is achieved by integrating causal principles with modern techniques in representation learning, allowing us to handle nonlinear dependencies while maintaining robustness. This advancement opens up new possibilities for applying causal methods to a broader range of problems, including those involving high-dimensional and nonlinear data. Unlike traditional approaches such as adversarial training, and predefined distributional robustness optimization (DRO) frameworks, which often fail to generalize effectively in the presence of complex dependencies or multi-source heterogeneity, our method provides a unified solution that bridges DRO, causality, and nonlinear representation learning. This new proposal addresses the limitations of existing methods and offers a more comprehensive and flexible framework for robust prediction. Furthermore, our framework avoids relying on strong assumptions about causal identifiability or predefined robustness metrics, instead allowing the data itself to drive the learning process. This flexibility is particularly important in real-world scenarios, where the training data do not contain enough information to identify the underlying causal mechanisms, but its heterogeneity can be exploited to define the uncertainty set. 
We validate our theoretical results through experiments on both synthetic datasets and real-world single-cell data.

\section{Related work}

\subsection{Distributionally robust opimization} 

Despite significant advances in machine learning, the deployment of ML models in real-world applications often reveals their limitations and vulnerabilities. These challenges arise from distributional shifts \citep{mansour2023domainadaptationlearningbounds,rothenhäusler2020anchor, shen2023causalityoriented}, adversarial attacks \citep{goodfellow2015explainingharnessingadversarialexamples, madry2019deeplearningmodelsresistant}, and noisy or incomplete data. 

Traditional approaches to robustness, such as regularization or adversarial training, provide valuable insights but often struggle to generalize across different types of challenges. 
This limitation has driven interest in methods that aim to exploit the invariant properties of the data. \citet{arjovsky2020invariantriskminimization} propose Invariant Risk Minimization (IRM), a framework designed to encourage models to learn invariant predictors across diverse training environments. This paradigm is especially relevant when models are deployed in scenarios where distributional shifts are expected, as it focuses on isolating relationships that remain robust under varying conditions.
This led to an increase in interest in combining robust prediction schemes with other methodologies, such as representation learning and causality \citep{schölkopf2021causalrepresentationlearning}, or reinforcement learning and distributional robustness \citep{smirnova2019distributionallyrobustreinforcementlearning, lu2024distributionallyrobustreinforcementlearning}. 
One foundational work toward achieving robustness in a distributional sense is \citet{delage2010distributionally} who formalized DRO using moment-based uncertainty sets. A more recent approach considers DRO in terms of Wasserstein distance \citep{esfahani2017datadrivendistributionallyrobustoptimization, Hanasusanto_2018, kuhn2024wassersteindistributionallyrobustoptimization}. An interesting remark on the Wasserstein approach is noted in \citet{gao2020wassersteindistributionallyrobustoptimization}, essentially relating LASSO and several other estimators to solutions of DRO problems. However, though it enjoys certain theoretical guarantees, the approach considers the worst-case distribution contained within a region in Wasserstein distance, hence yielding overly conservative predictions. 
A different yet similar approach can be seen in \citet{popescu2005optimalmoment} or \citet{zhen2023unifiedtheoryrobustdistributionally}, where structural properties of distributions (e.g. symmetry, unimodality, and convexity) are integrated into an uncertainty set based on moments. \citet{wiesemann2014distributionally} generalizes moment-based ambiguity sets using conic inequalities, allowing for more flexible moment constraints beyond just first and second moments.
For further background in general DRO, we refer the reader to \citet{Rahimian_2022}. 

In contrast to classical DRO methods, where the set of perturbations against which the model is protected is prespecified and often overly conservative, recent causality-inspired frameworks focus on \textit{isolating} the relevant directions along which distributional shifts occur, in a data-driven way. A novel work by \citet{rothenhäusler2020anchor} introduces a causality inspired framework that guarantees robustness against shifts in mean. Building on this foundation, \citet{shen2023causalityoriented} proposed the \textit{Distributional Robustness via Invariant Gradients} (DRIG) framework, which extends these guarantees to both mean and variance. Both works exploit the heterogeneity of the data originating from multiple sources.
Other directions of theoretical studies in this direction include studying the objective of IRM, and its potential problems \citep{rosenfeld2021risksinvariantriskminimization}, as well as distributional matching guarantees in terms of environments needed \citep{chen2022iterative}. Further interesting results on online domain generalization are considered in \citet{rosenfeld2022online}. 

\subsection{Latent representation learning}
One of the central concerns in context of representation learning and dimensionality reduction is the question of identifiability --- whether the features or representations learned genuinely reflect the underlying data structure or are they not more than products of the chosen combination of hyperparameters. Research in this field has advanced in many interesting directions and has become increasingly productive in recent years \citep{khemakhem2020variational, khemakhem2020icebeemidentifiableconditionalenergybased,schölkopf2021causalrepresentationlearning,kivva2022identifiability,moran2022identifiabledeepgenerativemodels,wang2023posteriorcollapselatentvariable}. This leap forward features an examination of the assumptions regarding latent variables, the nature of their generative processes and distributions, and even of their decoder functions. Notable work by \citet{khemakhem2020variational} investigates the setting of conditionally independent latent variables given an observed auxiliary variable, through the iVAE framework. 
Other approaches have sought to provide identifiability guarantees using polynomial decoders \citep{ahuja2024interventionalcausalrepresentationlearning}, volume-preserving decoders \citep{yang2022volume}, sparse VAEs \citep{moran2022identifiabledeepgenerativemodels}, and about identifiability in general \citep{roeder2020linearidentifiabilitylearnedrepresentations, buchholz2023learninglinearcausalrepresentations}. However, despite all these impressive results, a breakthrough work by \citet{kivva2022identifiability} sets a new standard and challenges the necessity of auxiliary information in the latent structure for achieving strong guarantees for a broad class of functions. 
Most of these works conclude that identifiability of the hidden representation is possible only up to an affine transformation.
\citet{saengkyongam2023identifying} propose the Rep4Ex framework, which uses interventional heterogeneity to recover latent state representations up to an affine transformation, enabling reliable extrapolation to unseen interventions. Assuming a linear structural causal model with exogenous interventions and full-support residuals, Rep4Ex enforces “linear invariance” through a custom autoencoder objective, learning representations that generalize across observed and off-support actions. In contrast, our approach frames robustness to bounded shifts as a finite-radius DRO problem, providing closed-form certificates under noise assumptions.

\section{Method}
\subsection{Model setup} \label{Setting}
We observe a $d$-dimensional covariate $X$ and a real target variable $Y$. It is often reasonable to assume the data distribution is entailed by an underlying causal mechanism, which is much weaker than assuming the identifiability of such a causal mechanism. We would like to also account for nonlinear causal relationships. One way to incorporate nonlinearity is through a nonlinear representation map of the covariates $X$ that transforms a complex distribution of $X$ to a better-structured latent space where the causal relationship can be as simple as linear.

Specifically, we assume there exists a function $\phi^*: \mathbb{R}^d\rightarrow \mathbb{R}^k$ for $d\geq k$ such that the observed variables $(X,Y)$ follow a structural causal model (SCM) in the usual or observational setting: 
\begin{equation}\label{scm1}
\begin{pmatrix}
\phi^*(X)\\[\jot]
Y
\end{pmatrix} = \textbf{B} \begin{pmatrix}
 \phi^*(X)\\[\jot] Y
\end{pmatrix} + \varepsilon
\end{equation}
where $\varepsilon\in \R^{k+1}$ has zero expectation, and is allowed to have correlated components and \textbf{B} denotes the adjacency matrix of the causal graph. We assume that $\textbf{I}-\textbf{B}$ to be invertible, which is guaranteed if the graph is acyclic, where $\textbf{I}$ denotes the identity matrix. This model allows for nonlinear causal relationships between the covariates and between $X$ and $Y$, which can be represented as a linear SCM up to a nonlinear transformation of the covariates. 

In practice, we often encounter distribution shifts due to interventions on observed or latent variables. We consider a multi-environment setup as in \citet{shen2023causalityoriented}, where for each environment indexed by $e$, the distributions of the transformed covariates and the response are shifted by a random, additive intervention, i.e., 
\begin{equation}\label{scm2}
\begin{pmatrix}
\phi^*(X^e)\\[\jot]
Y^e
\end{pmatrix} = \textbf{B} \begin{pmatrix}
 \phi^*(X^e)\\[\jot] Y^e
\end{pmatrix} + \varepsilon + \delta^e
\end{equation}
where $\delta^e \in \R^{k+1}$, denoting the additive intervention, is independent of $\varepsilon$. We assume during training, we have access to multiple environments $\mathcal{E}$, where one of them, indexed by $0\in\mathcal{E}$, is the observational setting such that $\delta^0=0$, while the others are interventional settings, each with a distinct intervention variable $\delta^e$.

Furthermore, we are interested in out-of-distribution prediction, where the data may exhibit a different underlying distribution, namely, according to
\begin{equation}\label{scm3}
\begin{pmatrix}
\phi^*(X^v)\\[\jot]
Y^v
\end{pmatrix} = \textbf{B} \begin{pmatrix}
 \phi^*(X^v)\\[\jot] Y^v
\end{pmatrix} + \varepsilon + v
\end{equation}
where the intervention variable $v\in \R^{k+1}$ follows an unseen distribution different from that of $\delta^e$'s and is independent of $\varepsilon$, while the transformation $\phi^*$ and the graph structure $\textbf{B}$ stay the same. 

Our target is an optimal nonlinear prediction model that is robustness among distributions generated according to \eqref{scm3} for a class of new interventions. 
When $\phi^*$ is the identity map, \citet{shen2023causalityoriented} presented an approach to achieve an optimal linear model that is robust among certain test distributions. This motivates us to develop a two-step approach that consists of a first representation learning step to learn $\phi^*$ up to a certain non-identifiable equivalence class and a second step applying a robust prediction objective on top of the learned representations. 

\subsection{Two-step approach}
\subsubsection{Representation learning step}
\citet{shen2024distributional} proposed Distributional Principal Autoencoder (DPA) that learns low-dimensional representations while preserving the data distribution in the reconstructions.  
We build our representation learning step upon DPA, where the encoder maps from the data space to the latent space and the stochastic decoder maps from the latent space to the data space:
$$X \xrightarrow{\text{enc}(.)} {Z}  \xrightarrow{\text{dec}(.,\tilde\varepsilon)} \widehat{X}$$
where $\tilde\varepsilon$ follows the standard normal distribution. The original DPA ensures that the decoder produces reconstructions $\hat{X}$ that follows the same distribution as the original data $X$, and the encoder minimizes the unexplained variability in the conditional distribution of $X|\text{enc}(X)$. These are achieved by minimizing the following objective function jointly over the encoder and decoder: 
\begin{align*}
    L_{\text{DPA}} =\conditionaland \Ex_{X}\Ex_{\tilde{\varepsilon}} \| X-\text{dec}(\text{enc}(X),\tilde{\varepsilon})\| \conditionalnextrow \conditionaland - \frac{1}{2}\Ex_{X}\Ex_{\tilde{\varepsilon},\tilde{\varepsilon}'} \| \text{dec}(\text{enc}(X),\tilde{\varepsilon}) - \text{dec}(\text{enc}(X),\tilde{\varepsilon}')\|,
\end{align*}
where $\tilde{\varepsilon}, \tilde{\varepsilon}'$ are independently drawn from the standard normal distribution. For a fixed encoder, the objective function $L_{\text{DPA}}$ is the expected negative \emph{energy score} for the conditional distribution of $X|\text{enc}(X)$; see \citet{shen2024distributional} for details. 

Here, 
we account for heterogeneity across different environments by encouraging the learned representations from the encoder to follow a mixture of Gaussians. 
To ensure this, a third neural network, called the \textit{prior} network $g$, is introduced in addition to the standard encoder-decoder framework already present. The prior network takes the environment labels $E$ as input and produces a sample of the latent vector ${Z}_g$ from a mixture of Gaussians; specifically, $g(E,\xi)=\mu_g(E) + \xi\Sigma_g(E)$ for $\xi$ standard normal. 
$$E \xrightarrow{g(., \xi)} {Z}_g. $$
Furthermore, since we want to encourage the latent to emulate a mixture of Gaussians, we also augment the loss function of the DPA to enforce the encoder output matches the distribution of the prior. The new loss term is the negative energy score for the conditional distributions of $\text{enc}(X)|E$:
\begin{align*}
    L_G =\conditionaland \Ex_{X,E}\Ex_{\xi} \| \text{enc}(X)- g(E,\xi)\|  \conditionalnextrow \conditionaland - \frac{1}{2}\Ex_{E}\Ex_{\xi,\xi'}\|g(E,\xi) - g(E,\xi')\|
\end{align*}
where $\xi,\xi'$ are independently drawn from a standard normal distribution. The formulation can be thought of as a conditional version of the DPA. 

Indeed, just as the optimum  of the loss function $L_{\text{DPA}}$ in \citet{shen2024distributional} was motivated by the goal of ensuring $$ \text{dec}^*(z,\tilde\varepsilon) \enspace \enspace= \enspace \enspace X|\{ \text{enc}^*(X)=z\}, \enspace \enspace \text{in distribution} \enspace \forall z, $$ the augmentation loss function $L_G$ was inspired by the goal of achieving 
$$g^*(e, \xi) \enspace \enspace=\enspace \enspace\text{enc}^* (X) | \{E=e\},  \enspace \enspace \text{in distribution} \enspace \forall e$$ where $\text{enc}^*,\text{dec}^*$, $g^*$ denote the optimized encoder, decoder and prior network, respectively.
The final augmented loss function reads 
\begin{align}
    L_{\textrm{RL}} = L_{\text{DPA}} + \alpha L_G, \label{finallossfct}
\end{align} for a selected hyperparameter $\alpha$. We define 
\begin{equation*}
	(\text{enc}^*, \text{dec}^*) \in \argmin_{(\text{enc}, \text{dec})} L_{\textrm{RL}},
\end{equation*}
where we also sometimes refer to $\text{enc}^*$ as $\widehat{\phi}$. 
Moreover, optimizing $L_{\textrm{RL}}$ gives $\widehat{\phi}(X) = \widehat{Z}$. Since DPA learns the distribution of $X$, as well as the distribution of its principal components in the latent space, the estimated latent vector is an affine transformation of the true latent vector $\widehat{z}=Az+c$, for an invertible matrix $A$ and a vector $c$. 
Affine identifiability is guaranteed in a range of settings, and it can be achieved by imposing conditions on the distribution of latent, variables (for example, presence of interventions), or on the mixing function. 
Since the autoencoder scheme learns to match the distribution of $X$, affine identification is ensured by \hyperref[Kivvalemma1]{Lemma~\ref*{Kivvalemma1}} from \citet{kivva2022identifiability}.  
It is worth noting that there are also other similar results which can be used to ensure affine identification in this setup. For example, since interventions are present in the considered setting, \hyperref[ahujaresult]{Lemma~\ref*{ahujaresult}} from \citet{ahuja2024interventionalcausalrepresentationlearning} can also be applied using VAE \citep{kingma2022autoencoding}. Some known applicable results on identifiability can be found in Appendix \ref{idfby}. 

\subsubsection{Causality-inspired robustness step}
Motivated by its guarantees for distributional robustness in linear settings and its adjustable robustness radius parameter, $\gamma\geq 0$, we employ the DRIG method \citep{shen2023causalityoriented} to learn a robust linear model on top of the representations learned in the first step.
Specifically, let $\widehat{Z}^e=\widehat\phi(X^e)$ be the learned representations for each environment $e\in\mathcal{E}$. Additionally, DRIG requires that all environments be centered relative to the mean of the reference environment, for both $\widehat{Z}^e,Y^e$. To satisfy this requirement, we adopt the following centering step; $$\widehat{z}^e_c = \widehat{z}^e - \mathbb{E}[\widehat{z}^0] = A(z^e - \mathbb{E}[z^0]) = A z^e_c, \enspace y^e_c = y^e - \mathbb{E}[y^0]$$ where this operation constitutes a linear transformation of the true latent variable. We then define the linear coefficients by
\begin{align}
    \widehat{b}_\gamma = \argmin_b L^\gamma_{\textrm{CIR}}(b),
\end{align} for
\begin{align*}
    L^\gamma_{\textrm{CIR}}(\conditionaland b) = \Ex [Y^0_c - b^\top \widehat{Z}^0_c]^2 \conditionalnextrow
    \conditionaland + \gamma \sum_{e\in\E} \omega^e \bigg(\Ex [Y^e_c - b^\top \widehat{Z}^e_c]^2 - \Ex [Y^0_c - b^\top \widehat{Z}^0_c]^2 \bigg),
\end{align*}
where $\omega^e>0$ denote environment weights and it holds $\sum_{e\in \E} \omega^e = 1$. For example, in the uniform case, $\omega=1/|\E|$, or $\omega^e=n_e/n$, where $n_e$ denotes the number of samples from environment $e$. 
We remark that $\widehat{b}_\gamma$ can be computed explicitly, by solving the equations obtained by first order conditions; 
\begin{align}\label{drigloss}
    \widehat{b}_\gamma = \bigg((1-\gamma)\,{\widehat{Z}^{0}_c}{}^{\top} \widehat{Z}^{0}_c + \gamma \sum_{e\in \mathcal{E}} \omega^e {\widehat{Z}^{e}_c}{}^{\top} \widehat{Z}^{e}_c \bigg)^{-1}\bigg( (1-\gamma){\widehat{Z}^{0}_c}{}^{\top} Y^{0}_c + \gamma \sum_{e\in \mathcal{E}} \omega^e {\widehat{Z}^{e}_c}{}^{\top} Y^{0}_c  \bigg).     
\end{align} 

\subsubsection{Final algorithm}
Building upon the previous two sections, we now summarize our two-step algorithm as follows:
\begin{itemize}
    \item Learn $\widehat{\phi}$ by optimizing $L_{\textrm{RL}}$ in equation \ref{finallossfct}.
    \item Estimate $\widehat{b}$ from $(\widehat{Z}_c, Y_c)$ as the solution of equation \ref{drigloss}.
    \item Define the final prediction model as $\widehat{f}(x) = \widehat{b}^\top\widehat\phi_c(x)$.
\end{itemize} 
We call our method CIRRL (Causality-Inspired Robustness via Representation Learning).

\subsection{Theoretical guarantees}
This subsection formalizes the theoretical guarantees for robustness and identifiability. Proofs of all the results can be found in Appendix \ref{proofsapx}. 
To quantify the robustness of a prediction model $f$, we use the following worst-case risk:
\begin{equation*}
    \mathcal{L}_\gamma(f) = \sup_{v\in C^\gamma} \Ex_v [Y - f(X)]^2,
\end{equation*}
where
\begin{equation*}
    C^{\gamma} = \bigg\{ v\in \R^{k+1} \enspace | \enspace \Ex[vv^\top] \preceq S^0 + \gamma \sum_{e\in \E} \omega^e \bigg( S^e - S^0 + \mu^e {\mu^e}^\top \bigg) \bigg\}
\end{equation*}
with $S^e = \cov[\delta^e]$ and $\mu^e = \Ex [\delta^e]$. 
In the case where $f$ can be rewritten as $f(x)=b^\top\phi(x)$ for some vector $b$ and an arbitrary function $\phi$, the worst-case risk $\mathcal{L}_\gamma({f})$ can be rewritten as $\mathcal{L}_\gamma(b^\top \phi) = \sup_{v\in C^\gamma} \Ex_v [Y - {b}^\top {\phi}(X)]^2$. 
The following lemma characterizes the set of perturbations against which the model is robust, in terms of $v$ and its distribution. 
Let $\phi_c(X):= \phi(X) - \Ex \phi(X^0)$ denote the centered version of $\phi$. 
\begin{proposition}\label{driglemma}
Assume that the SCMs (Equations \ref{scm2}, \ref{scm3}) hold as described above. Let $\phi: \R^d \rightarrow \R^k $ be an affine transform of $\phi^*$, i.e. $\phi(x)=N\phi^*(x)+m$  
for an invertible $k\times k$ matrix $N$, and a vector $m$. Then, the loss function $\mathcal{L}_\gamma(b^\top\phi_c)$ has explicit form, namely 
\begin{equation*}
    \mathcal{L}_\gamma(b^\top \phi_c) \conditionaland = \Ex [Y^0 - b^\top \phi_c(X^0)]^2 \conditionalnextrow \conditionaland + \gamma \sum_{e\in\E} \omega^e \bigg(\Ex [Y^e - b^\top \phi_c(X^e)]^2 - \Ex [Y^0 - b^\top \phi_c(X^0)]^2 \bigg),
\end{equation*}
where $\omega^e>0$ denote environment weights such that $\sum_{e\in \E} \omega^e = 1$. 
\end{proposition}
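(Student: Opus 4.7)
The plan is to reduce the worst-case supremum to a quadratic form in the intervention's second moment, so that the argument of \citet{shen2023causalityoriented} for the linear setting applies almost verbatim. The affine and translational redundancies $\phi = N\phi^* + m$ should wash out once the problem is expressed in the coordinates of $\phi^*$.

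First I would eliminate the affine nuisance parameters. Combining $\Ex\varepsilon = 0$ with the invertibility of $\I - \Bb$ forces $\Ex\phi^*(X^0) = 0$ (and $\Ex Y^0 = 0$), so the offset $m$ cancels in the centering, giving $\phi_c(X) = N\phi^*(X)$. Setting $\tilde b := N^\top b$ reduces the residual to $Y - b^\top \phi_c(X) = Y - \tilde b^\top \phi^*(X)$. Packing this into the vector $a := (-\tilde b^\top, 1)^\top \in \R^{k+1}$ and invoking the SCM (\ref{scm3}), I would write
\[
Y^v - b^\top \phi_c(X^v) \;=\; a^\top \begin{pmatrix} \phi^*(X^v) \\ Y^v \end{pmatrix} \;=\; a^\top (\I-\Bb)^{-1}(\varepsilon + v) \;=\; \tilde a^\top (\varepsilon + v),
\]
where $\tilde a := (\I-\Bb)^{-\top} a$; the same identity holds in each training environment with $v$ replaced by $\delta^e$, and in the observational case with $\delta^0 = 0$.

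Next, using independence of $v$ and $\varepsilon$ together with $\Ex\varepsilon = 0$, I would expand the squared loss as
\[
\Ex_v\!\big[(Y^v - b^\top\phi_c(X^v))^2\big] \;=\; \tilde a^\top \cov(\varepsilon)\,\tilde a + \tilde a^\top \Ex[v v^\top]\,\tilde a.
\]
Since this depends on the distribution of $v$ only through $\Ex[v v^\top]$ and is monotone in the L\"owner order, the supremum over $v \in C^\gamma$ is attained at the boundary matrix $S^0 + \gamma \sum_e \omega^e (S^e - S^0 + \mu^e {\mu^e}^\top)$ (realized, for instance, by a Gaussian $v$ with that second moment). Applying the same expansion to each training environment gives $\Ex[(Y^e - b^\top\phi_c(X^e))^2] = \tilde a^\top \cov(\varepsilon)\tilde a + \tilde a^\top (S^e + \mu^e {\mu^e}^\top)\tilde a$, which specializes at $e = 0$ to $\tilde a^\top \cov(\varepsilon)\tilde a$ since $\delta^0 = 0$. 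Combining these (using $S^0 = 0$, $\mu^0 = 0$) and weighting the differences by $\gamma \omega^e$ reproduces the claimed explicit form.

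The main obstacle is less any single calculation than the careful bookkeeping needed to confirm that the affine redundancy $\phi = N\phi^* + m$ is absorbed identically on both sides of the identity, so that the same effective direction $\tilde a$ controls both the worst-case expression and each training-environment expression and they can be matched coefficient-by-coefficient. A secondary subtlety is justifying attainment in the supremum over the matrix constraint $\Ex[v v^\top] \preceq M$, which follows from the fact that any positive semidefinite matrix arises as the second-moment matrix of some (e.g., Gaussian) distribution.
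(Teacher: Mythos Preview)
Your proposal is correct and follows essentially the same route as the paper's proof: both reduce the residual $Y^v - b^\top\phi_c(X^v)$ to a fixed linear functional of $\varepsilon + v$ (your $\tilde a$ is the paper's $w$), split the squared loss into the $\cov(\varepsilon)$ part and the $\Ex[vv^\top]$ part via independence, evaluate the supremum at the boundary of $C^\gamma$, and then match the resulting quadratic forms against the training-environment losses. The only cosmetic difference is that the paper keeps the $S^0$ and $\delta^0$ terms symbolically throughout (even though $\delta^0=0$), whereas you invoke $S^0=0$, $\mu^0=0$ directly; your explicit remark on attainment (via a Gaussian $v$ realizing the boundary second moment) is a point the paper leaves implicit.
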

This proposition provides a characterization of robustness guarantees, showing that the model can tolerate perturbations $v$ whose second moments are bounded by a weighted combination of second moments of training environments. Importantly, the degree of robustness can be controlled through the hyperparameter $\gamma$. 
Under relatively mild conditions (see Appendix \ref{idfby}), the latent variables can be identified up to an affine transformation. For example, for a piecewise affine decoder function $f$,  
\begin{lemma}\label{Kivvalemma1} 
    [\cite{kivva2022identifiability}] Let $\dim(E) = 1$. Under the Gaussian mixture model (GMM) as defined in Appendix \ref{idfby} and for $f$ a piecewise affine decoder function, if $f$ is weakly injective (see Appendix \ref{idfby}), $P(E,Z)$ can be identified from $P(X)$ up to an affine transform.
\end{lemma}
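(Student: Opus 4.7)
The plan is to combine the classical identifiability of Gaussian mixtures with the rigidity imposed by a piecewise affine, weakly injective decoder. Since $Z\mid E$ is Gaussian and $E$ is one-dimensional, the marginal distribution $P(Z)$ is a (possibly countable) Gaussian mixture whose components are uniquely determined by $P(Z)$ up to permutation, by the standard identifiability theorem for finite Gaussian mixtures. The task is therefore to show that $P(X)$ suffices to recover both this mixture structure and the decoder $f$, up to composition with an invertible affine map on $Z$.

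First, I would decompose the domain of $f$ into the maximal polyhedral pieces $\{R_i\}$ on which $f$ acts as an affine map $f_i(z) = A_i z + b_i$. On each such piece, the pushforward by $f_i$ of a Gaussian component of $P(Z)$ restricted to $R_i$ is the affine image of a Gaussian truncated to a polyhedron; its parameters (mean, covariance, and the supporting halfspaces carved out by $R_i$) can in principle be read off from the density of $X$ on the corresponding image region. Hence $P(X)$ decomposes into a countable mixture indexed by (component, piece) pairs, each with a distinguishable parametric form from which the affine pieces and the underlying latent Gaussians can be extracted locally.

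The main obstacle, and where weak injectivity becomes essential, is gluing these local identifications into a single consistent pair $(f, P(Z))$ and showing that the residual ambiguity is purely affine. Weak injectivity rules out the pathological scenario in which two distinct latent configurations produce indistinguishable distributions on $X$: it ensures that, on a set of full measure under $P(Z)$, the map $f$ is effectively injective. I would use this to argue that the local affine maps $A_i$ must be restrictions of a single global affine reparametrization, and that the image-side Gaussians lift back to a unique GMM on the latent side modulo this reparametrization, which accounts exactly for the stated affine equivalence class.

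The most delicate technical step is the analytic control near the boundaries between polyhedral regions and ruling out non-affine reparametrizations arising from coincidental matches between different truncated-Gaussian pieces; this is precisely where the weakly injective hypothesis is needed to close the argument. Since the statement is attributed to \cite{kivva2022identifiability}, the full technical execution would mirror the argument developed there, and a formal proof in this paper would likely reduce to citing the relevant theorem after verifying that the GMM setup and weak injectivity hypotheses hold in the present context.
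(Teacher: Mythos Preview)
Your final paragraph is exactly right, and it is the only part that matters for this paper: the lemma is stated without proof and is simply attributed to \cite{kivva2022identifiability}; there is no proof in the paper to compare against. The appropriate treatment is precisely to cite the relevant theorem after checking that the GMM latent structure, piecewise affine decoder, and weak injectivity hypotheses line up, which you correctly anticipated.

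As a side remark, your preceding sketch via truncated-Gaussian pieces and local gluing is not quite how Kivva et al.\ actually argue. Their proof uses weak injectivity to locate an open region on which $f$ is genuinely injective and affine, notes that the pushforward of a GMM by an invertible affine map is again a GMM, applies classical identifiability of Gaussian mixtures there, and then extends the conclusion globally via analytic continuation of the density (the measure-zero condition in weak injectivity handles the non-injective set). Your truncated-Gaussian reading and piece-by-piece gluing would be harder to make rigorous at region boundaries, whereas their analytic-continuation route avoids that entirely. But since the paper itself offers no proof, this distinction is tangential to the comparison you were asked to make.
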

Since the DPA part matches the distribution of $\widehat{X}$ to the distribution of $X$ when optimized we obtain an affine transform of the true latents. This provides a crucial foundation for designing a method that leverages a linear estimator on top, to obtain predictions. 
\begin{assumption}\label{assumptionintercept}
    $\mathbb{E}[v]=\Sigma [(\I - \Bb)^{-1}_{1:k, \cdot}]^\top\alpha$ for some $\alpha\in \R^k$, where $\Sigma=\mathrm{Cov}[\varepsilon + v]$.
\end{assumption}
This assumption means that \textit{the average effect of $v$ on $Y$ in the SCM passes only through $Z$}. This is easy to see, as the resulting is a vector of linear combinations of rows of $(\I - \Bb)^{-1}_{1:k, \cdot}$, which corresponds to the total causal effect on the vector $Z$. 
Based on the above results, the following theorem establishes the central theoretical finding of this work. 
It indicates that our learned prediction model is the most robust among all square-integrable functions. 
\begin{theorem}\label{nonlinrob}
Under SCMs described in equations \ref{scm2}, \ref{scm3}, and Assumption \ref{assumptionintercept} assume that $\varepsilon,v$ are elliptical (Definition \autoref{spherical_elliptical}) and recall that $\varepsilon$ and $v$ are independent. Let $\mathcal{X}$ denote the unbounded support of $X$ and $\Le$ the usual space of square-integrable functions over $\Xx$. 
Then,
$$\mathcal{L}_\gamma(\widehat{f}) = \min_{f\in \Le} \mathcal{L}_\gamma(f) $$
for $\widehat{f}(X) = \widehat{b}^\top \widehat{\phi}_c(X)$. 
\end{theorem}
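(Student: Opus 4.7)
The plan is to use Proposition~\ref{driglemma} to reduce the $\Le$--optimality claim to two ingredients: (i) optimality of $\widehat f$ among \emph{linear} predictors in $\widehat\phi_c$, and (ii) existence of one worst-case intervention $v^*\in C^\gamma$ for which the conditional expectation $\Ex_{v^*}[Y\mid X]$ coincides with $\widehat f(X)$. Ingredient (i) is almost automatic: Lemma~\ref{Kivvalemma1} gives $\widehat\phi=A\phi^*+c$ with $A$ invertible, so after centering at the observational mean one has $\widehat\phi_c=A\phi^*_c$, and the classes of linear functions of $\widehat\phi_c$ and of $\phi^*_c$ coincide; Proposition~\ref{driglemma} then says $\mathcal{L}_\gamma(b^\top\widehat\phi_c)=L^\gamma_{\textrm{CIR}}(b)$ for every $b$, so by its very definition $\widehat b$ minimizes $\mathcal{L}_\gamma(b^\top\widehat\phi_c)$ over $b$.

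For (ii), I would pick $v^*$ elliptical with $\Ex[v^*]=0$ and $\mathrm{Cov}(v^*)=\bar S$, where
\[
\bar S \;:=\; S^0 + \gamma\sum_{e\in\E}\omega^e\bigl(S^e - S^0 + \mu^e {\mu^e}^\top\bigr),
\]
so that $\Ex[v^*{v^*}^\top]=\bar S$ saturates the second--moment constraint defining $C^\gamma$ and Assumption~\ref{assumptionintercept} holds trivially with $\alpha=0$. Writing $M=(\I-\Bb)^{-1}$, equation~\ref{scm3} gives $(\phi^*(X^{v^*}),Y^{v^*})=M(\varepsilon+v^*)$, an affine image of the elliptical vector $\varepsilon+v^*$, hence itself elliptical. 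Consequently $\Ex_{v^*}[Y\mid\phi^*(X)]$ is affine in $\phi^*(X)$ with slope $M_{k+1,\cdot}\Sigma M_{1:k,\cdot}^\top(M_{1:k,\cdot}\Sigma M_{1:k,\cdot}^\top)^{-1}$ where $\Sigma=\mathrm{Cov}[\varepsilon+v^*]$. A direct first--order calculation on $L^\gamma_{\textrm{CIR}}$ at the population level shows that this slope equals $\widehat b^\top A$, i.e.\ it is precisely $\widehat b^\top$ when read in the learned coordinates. The intercept vanishes because $\Ex[v^*]=0$, and the independence of the decoder noise from $(\varepsilon,v)$ implies $Y\perp X\mid\phi^*(X)$, giving $\Ex_{v^*}[Y\mid X]=\Ex_{v^*}[Y\mid\phi^*(X)]=\widehat f(X)$.

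With $v^*$ in hand the conclusion is a short sandwich: for every $f\in\Le$,
\[
\mathcal{L}_\gamma(f) \;\ge\; \Ex_{v^*}\bigl[(Y-f(X))^2\bigr] \;\ge\; \Ex_{v^*}\bigl[(Y-\Ex_{v^*}[Y\mid X])^2\bigr] \;=\; \Ex_{v^*}\bigl[(Y-\widehat f(X))^2\bigr] \;=\; \mathcal{L}_\gamma(\widehat f),
\]
where the first inequality uses $v^*\in C^\gamma$, the second is the standard $\Le$--lower bound by the conditional expectation, and the last equality applies Proposition~\ref{driglemma} at $b=\widehat b$ together with the saturation of $v^*$.

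The main obstacle I anticipate is the intercept--cancellation step in the general case $\Ex[v]\neq 0$ permitted by Assumption~\ref{assumptionintercept}: one has to verify $(M_{k+1,\cdot}-\widehat b^\top M_{1:k,\cdot})\Sigma M_{1:k,\cdot}^\top=0$, which is precisely the DRIG first--order condition at the saturating covariance and is exactly why Assumption~\ref{assumptionintercept} routes $\Ex[v]$ through $\Sigma M_{1:k,\cdot}^\top$; without this alignment the mean shift would not be absorbed by any single linear predictor and the $\Le$--optimality would fail. A secondary subtlety is that $\varepsilon+v^*$ inherits ellipticity only under some compatibility of the two laws (for instance, joint Gaussianity), which is the implicit reading of the elliptical hypothesis in the theorem.
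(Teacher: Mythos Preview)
Your argument is correct and shares the paper's core ingredients---Proposition~\ref{driglemma} for the linear class, the $d$-separation $\Ex_v[Y\mid X]=\Ex_v[Y\mid\phi^*(X)]$, and Lemma~\ref{gausslemma} for affine regression under ellipticity---but the way you glue them together is genuinely different. The paper does \emph{not} single out a saturating $v^*$; instead it shows that for \emph{every} $v$ (satisfying the assumptions) the Bayes predictor $\Ex_v[Y\mid X]$ is linear in $\phi^*(X)$, hence $\min_{f\in\Le}\Ex_v[\cdot]=\min_{b}\Ex_v[\cdot]$ pointwise in $v$, and then passes directly to $\min_f\sup_v=\min_b\sup_v$. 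That last passage is stated without justification and implicitly relies on a min--max interchange for the linear subproblem; your saddle-point construction sidesteps this entirely by exhibiting $v^*$ with $\Ex[v^*{v^*}^\top]=\bar S$, verifying via the first-order condition that the elliptical regression slope under $v^*$ coincides with $\widehat b$ in the learned coordinates, and closing with the sandwich $\mathcal{L}_\gamma(f)\ge\Ex_{v^*}[(Y-f(X))^2]\ge\Ex_{v^*}[(Y-\widehat f(X))^2]=\mathcal{L}_\gamma(\widehat f)$. The trade-off: the paper's route is shorter but leaves the interchange step to the reader, while yours is self-contained at the cost of the extra slope-matching computation (which you correctly identify as the DRIG normal equation $M_{1:k,\cdot}\Sigma w=0$ at $\Sigma=\Sigma_\varepsilon+\bar S$). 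Your choice $\Ex[v^*]=0$ also cleanly bypasses Assumption~\ref{assumptionintercept}, whereas the paper invokes it for every $v$ to kill the intercept $(\I-AM)\mu_v$. Finally, your caveat that ellipticity of $\varepsilon+v^*$ does not follow from separate ellipticity of $\varepsilon$ and $v^*$ is well taken; the paper's proof implicitly assumes the sum is elliptical when applying Lemma~\ref{gausslemma}, so this is a shared gap rather than one specific to your route.
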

\begin{remark}
    Examples of elliptical distributions include multivariate Gaussian and multivariate $t$-distributions. 
\end{remark}

\section{Experiments} \label{exps}
We validate our theoretical results in a simulated environment and a real dataset involving gene perturbations. For context, we compare our method to other approaches that can be adopted in the setting concerning nonlinear and robust prediction.
Invariant risk minimization, parameterized by a fully connected neural network as introduced in \citet{arjovsky2020invariantriskminimization}. As a robust optimization method, IRM considers the heterogeneity of the data, protecting against arbitrarily strong perturbations in the distribution.
Empirical risk minimization, also parameterized by a fully connected neural network. This approach ignores heterogeneity in the data and aims only to find the best fit on the training set, in terms of MSE. A single (L4) GPU was used to train all models, and Adam \citep{kingma2017adammethodstochasticoptimization} was used for optimization. For details of the hyperparameters used in optimization, we refer to Appendix \ref{hyperparams}. 

\subsection{Synthetically generated dataset}
In simulated experiments, the data was generated according to the SCM introduced in Subsection \ref{Setting}, and the technical details are described in Appendix \ref{generatingscheme}. We sample one observational and four interventional environments with 2000 samples, each, from a two-dimensional latent variable model and embed them in 10 dimensional space using a polynomial decoder function. Out-of-distribution (OOD), or the test set, is generated with a fixed perturbation strength (Appendix \ref{generatingscheme}), with latents drawn from a Gaussian and $\chi^2$ for the well-specified and misspecified case, respectively. 

In realistic settings, the number of latent variables is unknown; and a criterion is required to determine the optimal latent dimension of the model. We refer to Figure \ref{fig:elbowplot_sc}, and, as shown, performance increases significantly when changing from one to two hidden dimensions, after which the performance stabilizes. 

We also evaluate the model in a misspecified OOD scenario disregarding the distributional assumption in \autoref{nonlinrob}. Specifically, we generate the OOD set from a $\chi^2$-distribution its degrees of freedom approximately corresponding to $1-$norm of $v$. The rest of the experimental setup mirrors that of the scenario im the well-specified case, with the only difference being the mentioned distribution of the test set. The results show no indication of significantly degraded performance compared to the ERM and IRM approaches. These findings suggest that the elliptical assumption in \autoref{nonlinrob} is not a fundamental constraint and could probably be relaxed without damaging the main guarantees.

\subsection{Single-cell datatset}

We evaluated the method using a large single-cell RNA sequencing dataset \cite{replogle2022singlecelldata}. This dataset involves genome-wide CRISPR-based perturbations performed on millions of human cells to systematically target expressed genes. For our analysis, we focus specifically on the subset of data derived from RPE1 cells, which emphasize genes likely to play critical roles and exhibit responsiveness to interventions. Following the preprocessing steps established by \citet{chevalley2023causalbenchlargescalebenchmarknetwork}, we selected the 10 genes with the highest expression levels to serve as observed variables. Among these, one gene is treated as the response variable, while the other nine are treated as predictors, based on the reasoning provided by \citet{shen2023causalityoriented}. Our training set consists of 11,485 reference samples, supplemented by data from 10 distinct interventional environments, each corresponding to a targeted perturbation of one of the observed genes. The number of samples per intervention varies between 100 and 500. Furthermore, the dataset includes numerous additional environments, where interventions are applied to unobserved genes outside the set of 10 selected genes. These unseen environments, which differ from those in the training set, are used as test scenarios to evaluate the robustness of prediction models across diverse distributions.

As is evident in Figure \ref{fig:elbowplot_sc}, there is no obvious latent dimension suggested in the results; therefore, since two is the most pronounced point, we select one point to the right. Figure \ref{fig:testenvsplotqq} and Figure \ref{fig:mainhistogram} show a competitive performance of the proposed model in unseen perturbations. In addition, the performance metric shows that robustness stabilizes once $\gamma$ reaches a moderate value. For instance, when $\gamma >10$ the performance levels off, indicating that fine-tuning $\gamma$ is not as critical for achieving better robust performance compared to standard approaches like IRM or ERM. 
We also analyze the performance of the approaches within identical environments. In Figure \ref{fig:mainhistogram}, the boxplots illustrate the discrepancies between the mean squared errors of a competitive method and CIRRL across various environments. Notably, CIRRL generally achieves superior prediction accuracy, particularly when $\gamma$ is set to a higher value.

\section{Conclusion}
In this work, we proposed a robust framework for estimating target variables in high-dimensional settings, designed to handle distribution shifts. We demonstrated its effectiveness on both synthetic and real-world datasets, consistently outperforming traditional methods like ERM and IRM. Our synthetic experiments confirmed the model's resilience to distributional variations, including deviations from ellipticity, suggesting potential for broader applications without strict structural assumptions. Future work could explore more complex latent structures, interactions between observed and unobserved variables, and relax the ellipticity assumption. Extensions to non-independent data or sequential environments also present promising directions.
\begin{figure*}[h]
\centering
    \includegraphics[width=\textwidth, height=4.7cm]{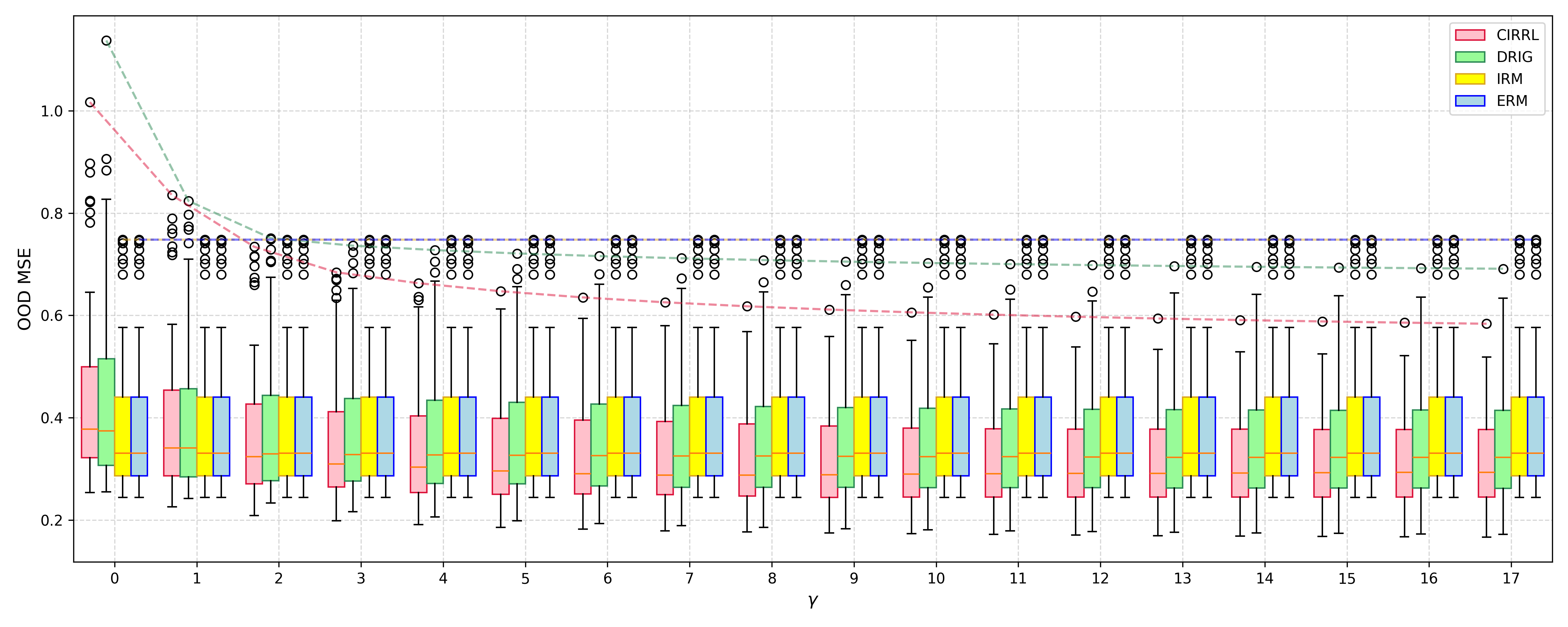}
\caption{Single-cell dataset.
Figure presents boxplots of the MSE across environments. In each group of four boxes—each corresponding to a specific value of $\gamma$ (as indicated on the $x$-axis).
Note that both IRM and ERM do not depend on $\gamma$, which is why their boxplots remain unchanged across different groups. The dotted lines overlaid on each group indicate the worst performing environment, marking the maximum MSE error (or worst quantile) observed.
}\label{fig:mainhistogram}
\end{figure*}
\begin{figure*}[h]
\centering
    \includegraphics[width=0.49\textwidth, height=4.2cm]{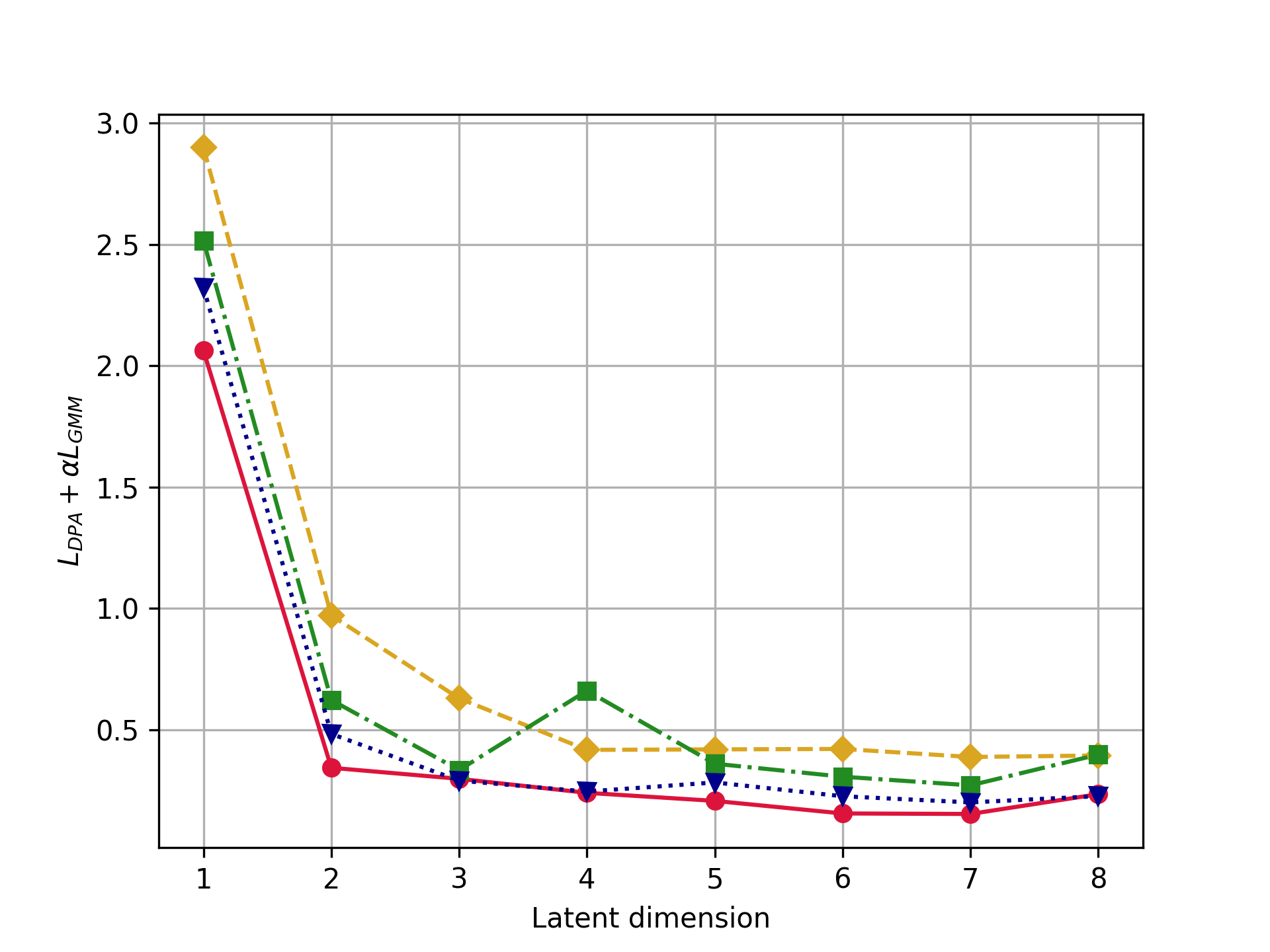}
    \includegraphics[width=0.49\textwidth, height=4.2cm]{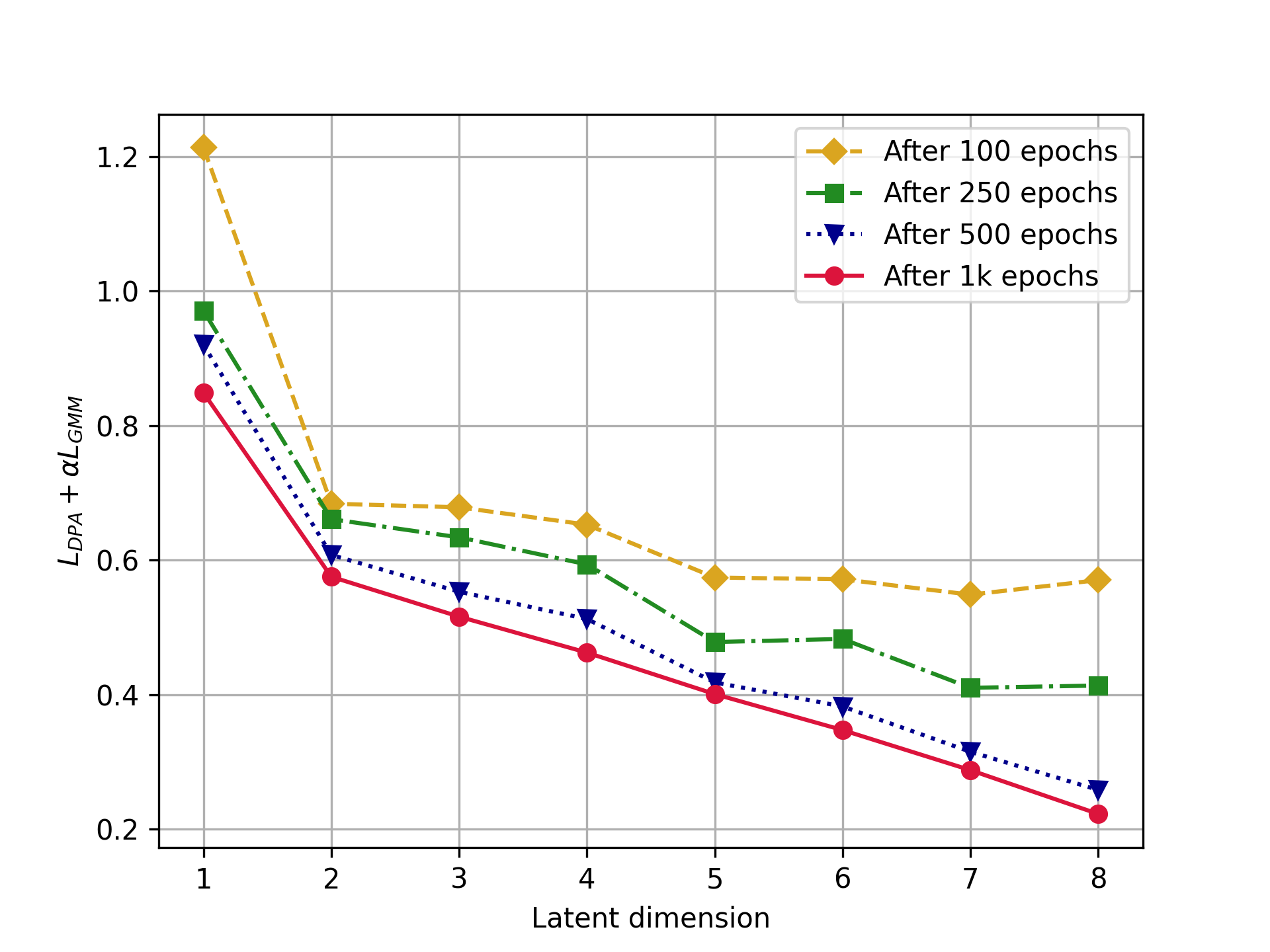}
\caption{Simulated (left), and single-cell (right) dataset - values of optimized loss function $L$ (equation \ref{finallossfct}). In case there is a \emph{clear} elbow point, as in the left figure, one should choose it as the latent dimension for the model. However, if the elbow point is less pronounced as it is the case in the second figure, we recommend choosing a value slightly to its right, in this example three or four.}\label{fig:elbowplot_sc}
\end{figure*}
\begin{figure}[h!]
\centering
    \includegraphics[width=0.49\textwidth, height=4.2cm]{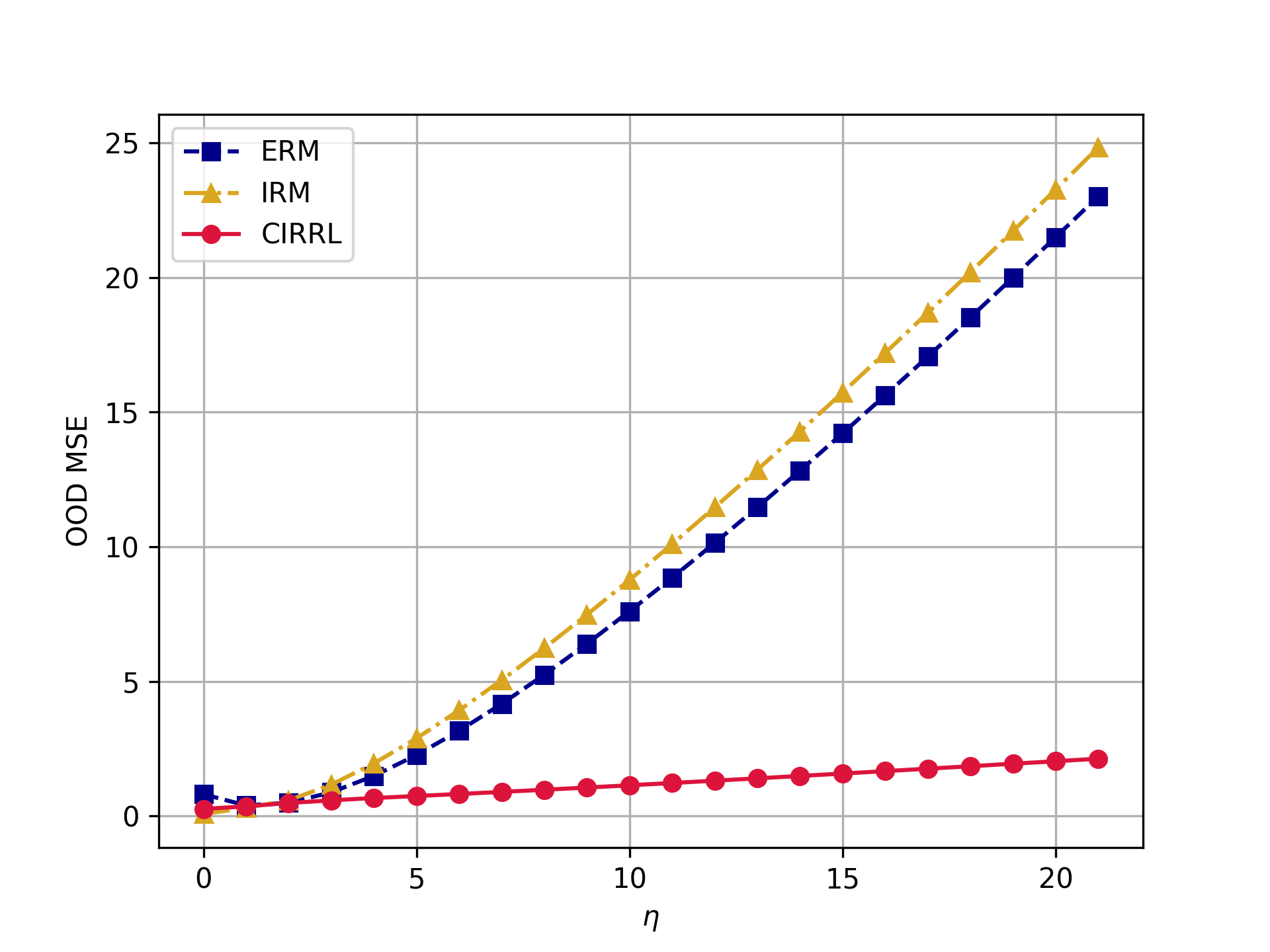}
    \includegraphics[width=0.49\textwidth, height=4.2cm]{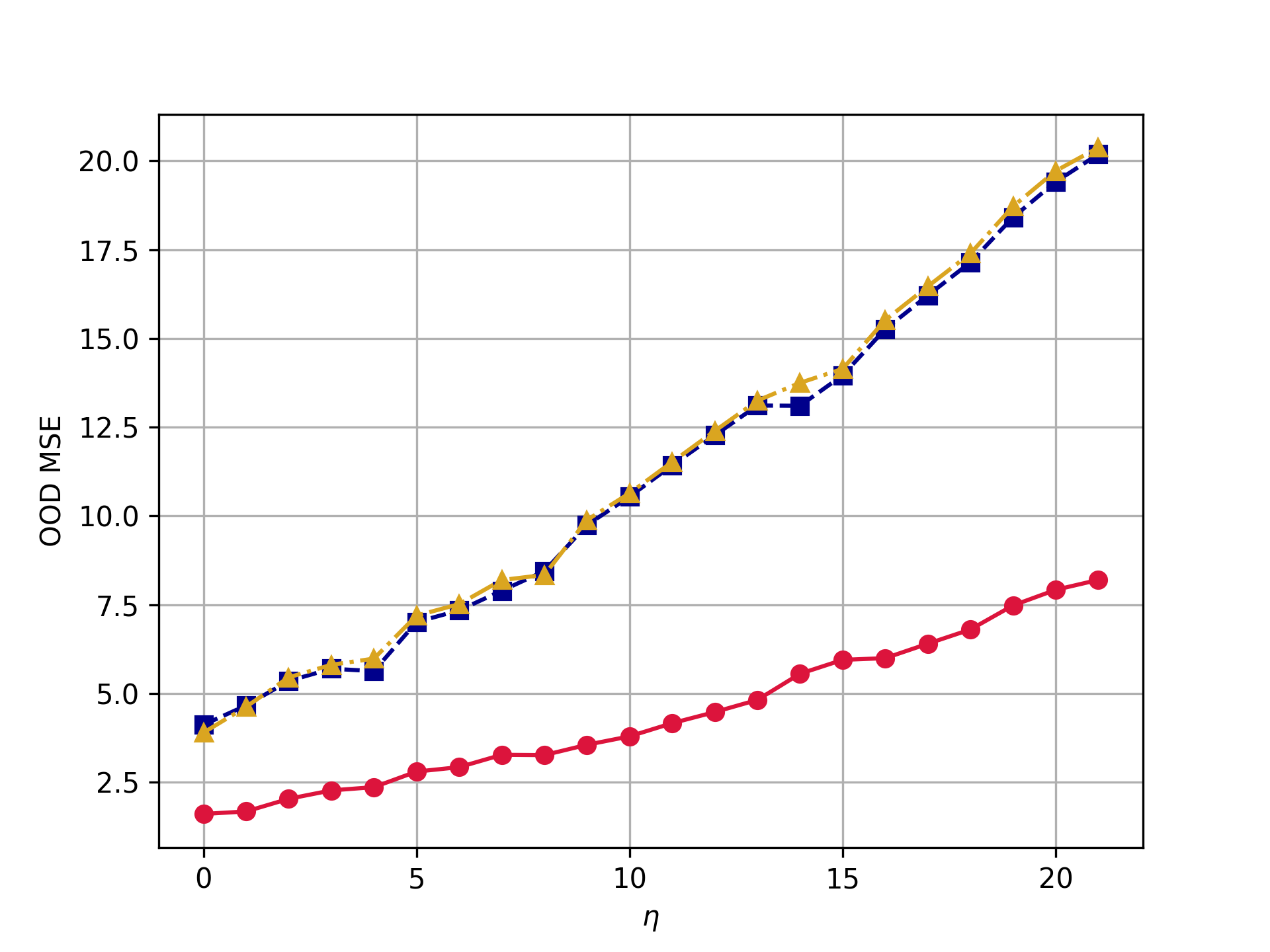}
\caption{Synthetic dataset for well-specified (left) \& misspecified setting (right)- final OOD MSEs for IRM, CIRRL, and ERM for various perturbation strengths $\eta$ (Appendix \ref{generatingscheme}). The inferiority of IRM to ERM could be attributed to the \emph{finite} nature of the perturbations.
}\label{fig:synth_lossvseta}
\end{figure}

\bibliography{example_paper.bib}

\begin{thebibliography}{43}
\providecommand{\natexlab}[1]{#1}
\providecommand{\url}[1]{\texttt{#1}}
\expandafter\ifx\csname urlstyle\endcsname\relax
  \providecommand{\doi}[1]{doi: #1}\else
  \providecommand{\doi}{doi: \begingroup \urlstyle{rm}\Url}\fi

\bibitem[Ahuja et~al.(2023)Ahuja, Mahajan, Wang, and Bengio]{ahuja2024interventionalcausalrepresentationlearning}
K.~Ahuja, D.~Mahajan, Y.~Wang, and Y.~Bengio.
\newblock Interventional causal representation learning.
\newblock In \emph{International conference on machine learning}, pages 372--407. PMLR, 2023.

\bibitem[Arjovsky et~al.(2019)Arjovsky, Bottou, Gulrajani, and Lopez-Paz]{arjovsky2020invariantriskminimization}
M.~Arjovsky, L.~Bottou, I.~Gulrajani, and D.~Lopez-Paz.
\newblock Invariant risk minimization.
\newblock \emph{arXiv preprint arXiv:1907.02893}, 2019.

\bibitem[Bertsimas et~al.(2018)Bertsimas, Gupta, and Kallus]{bertsimas2014datadrivenrobustoptimization}
D.~Bertsimas, V.~Gupta, and N.~Kallus.
\newblock Data-driven robust optimization.
\newblock \emph{Mathematical Programming}, 167:\penalty0 235--292, 2018.

\bibitem[Buchholz et~al.(2023)Buchholz, Rajendran, Rosenfeld, Aragam, Sch{\"o}lkopf, and Ravikumar]{buchholz2023learninglinearcausalrepresentations}
S.~Buchholz, G.~Rajendran, E.~Rosenfeld, B.~Aragam, B.~Sch{\"o}lkopf, and P.~Ravikumar.
\newblock Learning linear causal representations from interventions under general nonlinear mixing.
\newblock \emph{Advances in Neural Information Processing Systems}, 36:\penalty0 45419--45462, 2023.

\bibitem[Chen et~al.(2022)Chen, Rosenfeld, Sellke, Ma, and Risteski]{chen2022iterative}
Y.~Chen, E.~Rosenfeld, M.~Sellke, T.~Ma, and A.~Risteski.
\newblock Iterative feature matching: Toward provable domain generalization with logarithmic environments.
\newblock \emph{Advances in neural information processing systems}, 35:\penalty0 1725--1736, 2022.

\bibitem[Chevalley et~al.(2022)Chevalley, Roohani, Mehrjou, Leskovec, and Schwab]{chevalley2023causalbenchlargescalebenchmarknetwork}
M.~Chevalley, Y.~Roohani, A.~Mehrjou, J.~Leskovec, and P.~Schwab.
\newblock Causalbench: A large-scale benchmark for network inference from single-cell perturbation data.
\newblock \emph{arXiv preprint arXiv:2210.17283}, 2022.

\bibitem[Christiansen et~al.(2021)Christiansen, Pfister, Jakobsen, Gnecco, and Peters]{christiansen2021causalframeworkdistributiongeneralization}
R.~Christiansen, N.~Pfister, M.~E. Jakobsen, N.~Gnecco, and J.~Peters.
\newblock A causal framework for distribution generalization.
\newblock \emph{IEEE Transactions on Pattern Analysis and Machine Intelligence}, 44\penalty0 (10):\penalty0 6614--6630, 2021.

\bibitem[Delage and Ye(2010)]{delage2010distributionally}
E.~Delage and Y.~Ye.
\newblock Distributionally robust optimization under moment uncertainty with application to data-driven problems.
\newblock \emph{Operations Research}, 58\penalty0 (3):\penalty0 595--612, 2010.
\newblock \doi{10.1287/opre.1090.0741}.

\bibitem[Fang(1990)]{fang1990symmetric}
K.-T. Fang.
\newblock \emph{Symmetric Multivariate and Related Distributions}.
\newblock Chapman and Hall/CRC, 1st edition, 1990.
\newblock \doi{10.1201/9781351077040}.
\newblock URL \url{https://doi.org/10.1201/9781351077040}.

\bibitem[Gao et~al.(2024)Gao, Chen, and Kleywegt]{gao2020wassersteindistributionallyrobustoptimization}
R.~Gao, X.~Chen, and A.~J. Kleywegt.
\newblock Wasserstein distributionally robust optimization and variation regularization.
\newblock \emph{Operations Research}, 72\penalty0 (3):\penalty0 1177--1191, 2024.

\bibitem[Goodfellow et~al.(2014)Goodfellow, Shlens, and Szegedy]{goodfellow2015explainingharnessingadversarialexamples}
I.~J. Goodfellow, J.~Shlens, and C.~Szegedy.
\newblock Explaining and harnessing adversarial examples.
\newblock \emph{arXiv preprint arXiv:1412.6572}, 2014.

\bibitem[Hanasusanto and Kuhn(2018)]{Hanasusanto_2018}
G.~A. Hanasusanto and D.~Kuhn.
\newblock Conic programming reformulations of two-stage distributionally robust linear programs over wasserstein balls.
\newblock \emph{Operations Research}, 66\penalty0 (3):\penalty0 849–869, June 2018.
\newblock ISSN 1526-5463.
\newblock \doi{10.1287/opre.2017.1698}.
\newblock URL \url{http://dx.doi.org/10.1287/opre.2017.1698}.

\bibitem[Khemakhem et~al.(2020{\natexlab{a}})Khemakhem, Kingma, Monti, and Hyvarinen]{khemakhem2020variational}
I.~Khemakhem, D.~Kingma, R.~Monti, and A.~Hyvarinen.
\newblock Variational autoencoders and nonlinear ica: A unifying framework.
\newblock In \emph{International conference on artificial intelligence and statistics}, pages 2207--2217. PMLR, 2020{\natexlab{a}}.

\bibitem[Khemakhem et~al.(2020{\natexlab{b}})Khemakhem, Monti, Kingma, and Hyvarinen]{khemakhem2020icebeemidentifiableconditionalenergybased}
I.~Khemakhem, R.~Monti, D.~Kingma, and A.~Hyvarinen.
\newblock Ice-beem: Identifiable conditional energy-based deep models based on nonlinear ica.
\newblock \emph{Advances in Neural Information Processing Systems}, 33:\penalty0 12768--12778, 2020{\natexlab{b}}.

\bibitem[Kingma and Ba(2014)]{kingma2017adammethodstochasticoptimization}
D.~P. Kingma and J.~Ba.
\newblock Adam: A method for stochastic optimization.
\newblock \emph{arXiv preprint arXiv:1412.6980}, 2014.

\bibitem[Kingma et~al.(2013)Kingma, Welling, et~al.]{kingma2022autoencoding}
D.~P. Kingma, M.~Welling, et~al.
\newblock Auto-encoding variational bayes, 2013.

\bibitem[Kivva et~al.(2022)Kivva, Rajendran, Ravikumar, and Aragam]{kivva2022identifiability}
B.~Kivva, G.~Rajendran, P.~Ravikumar, and B.~Aragam.
\newblock Identifiability of deep generative models without auxiliary information.
\newblock \emph{Advances in Neural Information Processing Systems}, 35:\penalty0 15687--15701, 2022.

\bibitem[Kuhn et~al.(2019)Kuhn, Esfahani, Nguyen, and Shafieezadeh-Abadeh]{kuhn2024wassersteindistributionallyrobustoptimization}
D.~Kuhn, P.~M. Esfahani, V.~A. Nguyen, and S.~Shafieezadeh-Abadeh.
\newblock Wasserstein distributionally robust optimization: Theory and applications in machine learning.
\newblock In \emph{Operations research \& management science in the age of analytics}, pages 130--166. Informs, 2019.

\bibitem[Louizos et~al.(2017)Louizos, Shalit, Mooij, Sontag, Zemel, and Welling]{louizos2017causaleffectinferencedeep}
C.~Louizos, U.~Shalit, J.~M. Mooij, D.~Sontag, R.~Zemel, and M.~Welling.
\newblock Causal effect inference with deep latent-variable models.
\newblock \emph{Advances in neural information processing systems}, 30, 2017.

\bibitem[Lu et~al.(2024)Lu, Zhong, Zhang, and Blanchet]{lu2024distributionallyrobustreinforcementlearning}
M.~Lu, H.~Zhong, T.~Zhang, and J.~Blanchet.
\newblock Distributionally robust reinforcement learning with interactive data collection: Fundamental hardness and near-optimal algorithm.
\newblock \emph{arXiv preprint arXiv:2404.03578}, 2024.

\bibitem[Madry et~al.(2017)Madry, Makelov, Schmidt, Tsipras, and Vladu]{madry2019deeplearningmodelsresistant}
A.~Madry, A.~Makelov, L.~Schmidt, D.~Tsipras, and A.~Vladu.
\newblock Towards deep learning models resistant to adversarial attacks.
\newblock \emph{arXiv preprint arXiv:1706.06083}, 2017.

\bibitem[Mansour et~al.(2009)Mansour, Mohri, and Rostamizadeh]{mansour2023domainadaptationlearningbounds}
Y.~Mansour, M.~Mohri, and A.~Rostamizadeh.
\newblock Domain adaptation: Learning bounds and algorithms.
\newblock \emph{arXiv preprint arXiv:0902.3430}, 2009.

\bibitem[Mohajerin~Esfahani and Kuhn(2018)]{esfahani2017datadrivendistributionallyrobustoptimization}
P.~Mohajerin~Esfahani and D.~Kuhn.
\newblock Data-driven distributionally robust optimization using the wasserstein metric: Performance guarantees and tractable reformulations.
\newblock \emph{Mathematical Programming}, 171\penalty0 (1):\penalty0 115--166, 2018.

\bibitem[Moran et~al.(2021)Moran, Sridhar, Wang, and Blei]{moran2022identifiabledeepgenerativemodels}
G.~E. Moran, D.~Sridhar, Y.~Wang, and D.~M. Blei.
\newblock Identifiable deep generative models via sparse decoding.
\newblock \emph{arXiv preprint arXiv:2110.10804}, 2021.

\bibitem[Peters et~al.(2016)Peters, B{\"u}hlmann, and Meinshausen]{peters2015causalinferenceusinginvariant}
J.~Peters, P.~B{\"u}hlmann, and N.~Meinshausen.
\newblock Causal inference by using invariant prediction: identification and confidence intervals.
\newblock \emph{Journal of the Royal Statistical Society Series B: Statistical Methodology}, 78\penalty0 (5):\penalty0 947--1012, 2016.

\bibitem[Pfister et~al.(2021)Pfister, Williams, Peters, Aebersold, and B{\"u}hlmann]{pfister2021stabilizingvariableselectionregression}
N.~Pfister, E.~G. Williams, J.~Peters, R.~Aebersold, and P.~B{\"u}hlmann.
\newblock Stabilizing variable selection and regression.
\newblock \emph{The Annals of Applied Statistics}, 15\penalty0 (3):\penalty0 1220--1246, 2021.
\newblock URL \url{https://arxiv.org/abs/1911.01850}.

\bibitem[Popescu(2005)]{popescu2005optimalmoment}
I.~Popescu.
\newblock A semidefinite programming approach to optimal-moment bounds for convex classes of distributions.
\newblock \emph{Math. Oper. Res.}, 30:\penalty0 632--657, 2005.
\newblock URL \url{https://api.semanticscholar.org/CorpusID:6275171}.

\bibitem[Rahimian and Mehrotra(2022)]{Rahimian_2022}
H.~Rahimian and S.~Mehrotra.
\newblock Frameworks and results in distributionally robust optimization.
\newblock \emph{Open Journal of Mathematical Optimization}, 3:\penalty0 1–85, July 2022.
\newblock ISSN 2777-5860.
\newblock \doi{10.5802/ojmo.15}.
\newblock URL \url{http://dx.doi.org/10.5802/ojmo.15}.

\bibitem[Replogle et~al.(2022)Replogle, Saunders, Pogson, Hussmann, Lenail, Guna, Mascibroda, Wagner, Adelman, Lithwick-Yanai, Iremadze, Oberstrass, Lipson, Bonnar, Jost, Norman, and Weissman]{replogle2022singlecelldata}
J.~M. Replogle, R.~A. Saunders, A.~N. Pogson, J.~A. Hussmann, A.~Lenail, A.~Guna, L.~Mascibroda, E.~J. Wagner, K.~Adelman, G.~Lithwick-Yanai, N.~Iremadze, F.~Oberstrass, D.~Lipson, J.~L. Bonnar, M.~Jost, T.~M. Norman, and J.~S. Weissman.
\newblock Mapping information-rich genotype-phenotype landscapes with genome-scale perturb-seq.
\newblock \emph{Cell}, 185\penalty0 (14):\penalty0 2559--2575.e28, 2022.
\newblock \doi{10.1016/j.cell.2022.05.013}.

\bibitem[Roeder et~al.(2021)Roeder, Metz, and Kingma]{roeder2020linearidentifiabilitylearnedrepresentations}
G.~Roeder, L.~Metz, and D.~Kingma.
\newblock On linear identifiability of learned representations.
\newblock In \emph{International Conference on Machine Learning}, pages 9030--9039. PMLR, 2021.

\bibitem[Rojas-Carulla et~al.(2018)Rojas-Carulla, Sch{\"o}lkopf, Turner, and Peters]{rojascarulla2018invariantmodelscausaltransfer}
M.~Rojas-Carulla, B.~Sch{\"o}lkopf, R.~Turner, and J.~Peters.
\newblock Invariant models for causal transfer learning.
\newblock \emph{Journal of Machine Learning Research}, 19\penalty0 (36):\penalty0 1--34, 2018.

\bibitem[Rosenfeld et~al.(2020)Rosenfeld, Ravikumar, and Risteski]{rosenfeld2021risksinvariantriskminimization}
E.~Rosenfeld, P.~Ravikumar, and A.~Risteski.
\newblock The risks of invariant risk minimization.
\newblock \emph{arXiv preprint arXiv:2010.05761}, 2020.

\bibitem[Rosenfeld et~al.(2022)Rosenfeld, Ravikumar, and Risteski]{rosenfeld2022online}
E.~Rosenfeld, P.~Ravikumar, and A.~Risteski.
\newblock An online learning approach to interpolation and extrapolation in domain generalization.
\newblock In \emph{International Conference on Artificial Intelligence and Statistics}, pages 2641--2657. PMLR, 2022.

\bibitem[Rothenh{\"a}usler et~al.(2021)Rothenh{\"a}usler, Meinshausen, B{\"u}hlmann, and Peters]{rothenhäusler2020anchor}
D.~Rothenh{\"a}usler, N.~Meinshausen, P.~B{\"u}hlmann, and J.~Peters.
\newblock Anchor regression: Heterogeneous data meet causality.
\newblock \emph{Journal of the Royal Statistical Society Series B: Statistical Methodology}, 83\penalty0 (2):\penalty0 215--246, 2021.
\newblock URL \url{https://doi.org/10.1111/rssb.12398}.

\bibitem[Saengkyongam et~al.(2023)Saengkyongam, Rosenfeld, Ravikumar, Pfister, and Peters]{saengkyongam2023identifying}
S.~Saengkyongam, E.~Rosenfeld, P.~Ravikumar, N.~Pfister, and J.~Peters.
\newblock Identifying representations for intervention extrapolation.
\newblock \emph{arXiv preprint arXiv:2310.04295}, 2023.

\bibitem[Sch{\"o}lkopf et~al.(2021)Sch{\"o}lkopf, Locatello, Bauer, Ke, Kalchbrenner, Goyal, and Bengio]{schölkopf2021causalrepresentationlearning}
B.~Sch{\"o}lkopf, F.~Locatello, S.~Bauer, N.~R. Ke, N.~Kalchbrenner, A.~Goyal, and Y.~Bengio.
\newblock Toward causal representation learning.
\newblock \emph{Proceedings of the IEEE}, 109\penalty0 (5):\penalty0 612--634, 2021.

\bibitem[Shen and Meinshausen(2024)]{shen2024distributional}
X.~Shen and N.~Meinshausen.
\newblock Distributional principal autoencoders.
\newblock \emph{arXiv preprint arXiv:2404.13649}, 2024.

\bibitem[Shen et~al.(2023)Shen, Buhlmann, and Taeb]{shen2023causalityoriented}
X.~Shen, P.~Buhlmann, and A.~Taeb.
\newblock Causality-oriented robustness: exploiting general additive interventions.
\newblock 2023.
\newblock URL \url{https://arxiv.org/abs/2307.10299}.

\bibitem[Smirnova et~al.(2019)Smirnova, Dohmatob, and Mary]{smirnova2019distributionallyrobustreinforcementlearning}
E.~Smirnova, E.~Dohmatob, and J.~Mary.
\newblock Distributionally robust reinforcement learning.
\newblock \emph{arXiv preprint arXiv:1902.08708}, 2019.

\bibitem[Wang et~al.(2023)Wang, Blei, and Cunningham]{wang2023posteriorcollapselatentvariable}
Y.~Wang, D.~M. Blei, and J.~P. Cunningham.
\newblock Posterior collapse and latent variable non-identifiability.
\newblock \emph{arXiv preprint arXiv:2301.00537}, 2023.

\bibitem[Wiesemann et~al.(2014)Wiesemann, Kuhn, and Sim]{wiesemann2014distributionally}
W.~Wiesemann, D.~Kuhn, and M.~Sim.
\newblock Distributionally robust convex optimization.
\newblock \emph{Operations Research}, 62\penalty0 (6):\penalty0 1358--1376, 2014.
\newblock \doi{10.1287/opre.2014.1314}.
\newblock URL \url{http://www.optimization-online.org/DB_HTML/2013/02/3757.html}.

\bibitem[Yang et~al.(2022)Yang, Wang, Sun, Zhang, Zhang, Li, and Yan]{yang2022volume}
X.~Yang, Y.~Wang, J.~Sun, X.~Zhang, S.~Zhang, Z.~Li, and J.~Yan.
\newblock Nonlinear ica using volume-preserving transformations.
\newblock \emph{ICLR}, 2022.
\newblock URL \url{https://openreview.net/pdf?id=AMpki9kp8Cn}.

\bibitem[Zhen et~al.(2025)Zhen, Kuhn, and Wiesemann]{zhen2023unifiedtheoryrobustdistributionally}
J.~Zhen, D.~Kuhn, and W.~Wiesemann.
\newblock A unified theory of robust and distributionally robust optimization via the primal-worst-equals-dual-best principle.
\newblock \emph{Operations Research}, 73\penalty0 (2):\penalty0 862--878, 2025.

\end{thebibliography}

\newpage
\appendix
\onecolumn

\section{Symbols}
\begin{table}[h]
\caption{Table of Notation}
\label{notationstable}
\vskip 0.15in
\begin{center}
\begin{small}
\begin{sc}
\begin{tabular}{lcccr}
\toprule
Symbol & Meaning \\
\midrule
$\textbf{B}\in \R^{(k+1)\times(k+1)}$ & Adjacency matrix of the latent causal graph \\
$\textbf{I}$ &  Identity \\
$\phi^*$ & True decoder function \\
$X, X^e$ & Observed covariates, observed covariates in environment $e$, respectively \\
$Z, Z^e$ & Latent variables, latent variables in environment $e$, respectively \\
$Y, Y^e$ & Response, response in environment $e$, respectively \\
$\varepsilon \in \R^{k+1}$ & Noise variable \\
$\xi, \tilde{\varepsilon}$ & Noise variables for $g, \text{dec}$, respectively \\
$\delta^e \in \R^{k+1}$ & Intervention in environment $e$ \\
$v\in \R^{k+1}$ & Intervention in the test environment \\
$\text{enc}^*, \widehat{\phi}$ & Optimized encoder function \\
$\text{dec}^*$ & Optimized decoder function \\
$g^*$ & Learned and optimized prior encoder function \\
$\mu_g, \Sigma_g$ & $g$ estimates of the latent mean and covariance \\
$\widehat{b}$ & Estimated DRIG coefficient \\
$\gamma$ & Perturbation strength to account for in DRIG estimation \\
$C^{\gamma}$ & Uncertainty set of DRIG \\
$\widehat{f}$ & Final model \\
$\mathbb{L}^2(\mathcal{A})$ & Space of square integrable functions over $\mathcal{A}$\\
$\eta$ & Perturbation strength in the test environment - simulated data \\
\bottomrule
\end{tabular}
\end{sc}
\end{small}
\end{center}
\vskip -0.1in
\end{table}

\section{Identifiability}\label{idfby}
\begin{definition} (Weakly injective)
    Let $f:\mathbb{R}^k\rightarrow \mathbb{R}^d$ and $k\leq d$
    . A function $f$ is said to be weakly injective, if 
    \begin{itemize}
        \item There is a point $x_0\in \mathbb{R}^d$ and $\delta>0$ with $|f^{-1}(x)| = 1 $ for all $x$ in the image of $f$ intersected with the $\delta$-ball around $x_0$, \textbf{and}
        \item the set $\{ x\in \mathbb{R}^d : |f^{-1}(x)|=\infty \}\subseteq f(\mathbb{R}^k)$ has Lebesgue measure zero.
    \end{itemize}
\end{definition}
Assuming $P$ follows a Gaussian mixture, 
together, $E$ with $Z$ encodes the latent structure
$$E = e \sim P_\lambda(E=e) $$ 
$$Z | E=e \sim \mathcal{N}(\mu_e, \Sigma_e) $$
$$E \rightarrow Z \rightarrow X$$
where $P_{\lambda}$ denotes the marginal distribution of $E$ that depends on $\lambda_j$, and $\sum_{j=1}^J \lambda_j = 1$ denote the positive weights of the components in the Gaussian mixture.
For multivariate but discrete $E$, one is required to reconstruct everything just from the distribution of $X$. Under stronger conditions, it is also possible to identify $P(E)$.
\begin{lemma} \label{Kivvalemma2} 
    \citep{kivva2022identifiability} Let the GMM model as defined above hold and let $f$ be piecewise affine. If $f$ is weakly injective, $P(Z)$ is identifiable from $P(X)$ up to an affine transform. 
\end{lemma}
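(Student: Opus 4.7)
The plan is to exploit the weak injectivity of $f$ to localize the analysis around the distinguished point $x_0$ and then invoke the classical Yakowitz--Spragins identifiability theorem for finite Gaussian mixtures, combined with real-analytic continuation of Gaussian densities, to recover $P(Z)$ up to an affine ambiguity.

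First I would reduce the problem to a single affine piece. By the first clause of the weak injectivity assumption, $f^{-1}$ is single-valued on the set $B_\delta(x_0) \cap f(\mathbb{R}^k)$. Because $f$ is piecewise affine with single-valued preimage here, all preimage points in $f^{-1}(B_\delta(x_0))$ must lie inside a single affine region of the underlying partition, on which $f$ agrees with a fixed affine map $z \mapsto A z + b$ for some $A \in \mathbb{R}^{d \times k}$ and $b \in \mathbb{R}^d$; injectivity on this region forces $A$ to have full column rank. Thus, on a small patch of $X$-space, the generative map is exactly an injective affine transformation of $Z$.

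Next I would translate this into a distributional statement. The restriction of $P(X)$ to $B_\delta(x_0)$ equals the restriction of the law of $AZ + b$, which, since $Z$ is a finite Gaussian mixture, is the finite mixture $\sum_j \lambda_j \mathcal{N}(A\mu_j + b,\, A\Sigma_j A^\top)$ supported on the affine subspace $A\mathbb{R}^k + b$, restricted to $B_\delta(x_0)$. Gaussian densities are real-analytic, so knowledge of the mixture density on a relatively open subset of $A\mathbb{R}^k + b$ determines the density on the entire subspace by analytic continuation. The Yakowitz--Spragins identifiability theorem for finite Gaussian mixtures then recovers the parameters $\{(\lambda_j, A\mu_j + b, A\Sigma_j A^\top)\}_j$ uniquely up to permutation. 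Any other candidate mixing distribution for $Z$ compatible with $P(X)$ must therefore produce the same post-composition parameters, which is precisely identifiability of $P(Z)$ up to the affine transformation $z \mapsto A z + b$.

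The main obstacle I anticipate lies in the regime $k < d$: the pushforward mixture is then singular in ambient $\mathbb{R}^d$ and supported on a lower-dimensional affine subspace, so one must argue that the patch on which we see the density is genuinely open within $A\mathbb{R}^k + b$ (and not merely in $\mathbb{R}^d$) in order for analytic continuation and the Yakowitz--Spragins theorem to apply. This is where the second clause of weak injectivity is essential: since the set of $x$ with infinitely many preimages has Lebesgue measure zero, the local patch $B_\delta(x_0) \cap f(\mathbb{R}^k)$ is not contaminated by other affine pieces on a set of positive $k$-dimensional measure, so one can extract a relatively open subset of the affine subspace on which the pushforward density is the clean, untainted restriction of the single-piece Gaussian mixture.
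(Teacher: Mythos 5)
The paper does not actually prove this lemma: it is imported verbatim from \citet{kivva2022identifiability} (their Theorem~3.2), so the only fair comparison is against that source. Your skeleton --- localize to a patch where $f$ is an injective affine map, push the GMM forward to a degenerate GMM on the affine subspace $A\mathbb{R}^k+b$, use real-analyticity to extend from the patch to the whole subspace, and invoke Yakowitz--Spragins --- is indeed the strategy of the original argument, and your remark about working with the density relative to the $k$-dimensional affine subspace rather than Lebesgue measure on $\mathbb{R}^d$ is the right concern in the regime $k<d$.

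There is, however, a genuine gap: identifiability is a statement about \emph{two} candidate models, and your argument only ever localizes one of them. Concretely, suppose $(f,P_Z)$ and $(f',P_{Z'})$ are both admissible and $f_{\#}P_Z=f'_{\#}P_{Z'}$. Weak injectivity of $f$ gives you a ball $B_\delta(x_0)$ on which $f$ is invertible and (after shrinking) affine, and your continuation argument recovers the law of $AZ+b$. But the analogous construction for $f'$ produces a ball around a possibly different point $x_0'$ and a possibly different affine subspace $A'\mathbb{R}^k+b'$, and you never show that the two globally continued mixtures coincide --- the sentence ``any other candidate mixing distribution must produce the same post-composition parameters'' is precisely the step that is missing. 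On your patch $B_\delta(x_0)$ the decoder $f'$ need not be injective or affine at all, so you cannot read off $P_{Z'}$ there. The original proof closes this by a separate measure-theoretic argument producing a \emph{common} ball on which both $f$ and $f'$ are simultaneously invertible and affine, so that $h=(f')^{-1}\circ f$ is affine on an open set of $Z$-space and pushes a restriction of $P_Z$ to a restriction of $P_{Z'}$; only then does analyticity upgrade $h_{\#}P_Z=P_{Z'}$ to a global identity. A secondary, more minor issue: your claim that single-valuedness of $f^{-1}$ on $B_\delta(x_0)\cap f(\mathbb{R}^k)$ forces all preimages into a single affine cell is not automatic --- the distinguished preimage $z_0$ may lie on the boundary of several cells of the partition, and one must shrink $\delta$ and argue that the finitely many affine pieces meeting $z_0$ either agree near $z_0$ or would violate injectivity; the second clause of weak injectivity is used there rather than in the way you describe.
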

\cite{ahuja2024interventionalcausalrepresentationlearning} assume that the interior of the support of $z$ 
is nonempty subset of $\mathbb{R}^k$, and that the decoder function $f$ is an injective polynomial of finite degree. Also, assuming that the interior of the encoder image is a nonempty subset of $\mathbb{R}^k$ 
\begin{lemma}\label{ahujaresult}
    If the autoencoder solves the reconstruction identity $\text{dec}\circ\text{enc}(x)=x$ for all $x$ in its support, under the constraint that the learned decoder is a polynomial of the same degree as the true decoder, then it achieves affine identification, ie recovers an affine transformation of the true latent variables.
\end{lemma}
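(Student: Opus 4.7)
The plan is to reduce the seemingly infinite-dimensional problem $\min_{f\in\Le}\mathcal{L}_\gamma(f)$ to the finite-dimensional linear DRIG problem that $\widehat{b}$ already solves, and then to invoke \hyperref[driglemma]{Proposition~\ref*{driglemma}}. By the identifiability result (\hyperref[Kivvalemma1]{Lemma~\ref*{Kivvalemma1}} or \hyperref[ahujaresult]{Lemma~\ref*{ahujaresult}}) we have $\widehat\phi(X)=A\phi^*(X)+c$ for some invertible $A\in\R^{k\times k}$ and $c\in\R^k$, so $\widehat{\phi}_c(X)=A\phi^*_c(X)$ and the families $\{b^\top\widehat{\phi}_c:b\in\R^k\}$ and $\{b^\top\phi^*_c:b\in\R^k\}$ coincide. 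Applying \hyperref[driglemma]{Proposition~\ref*{driglemma}} with $\phi=\widehat\phi$ (which is an affine transform of $\phi^*$) yields the closed form $\mathcal{L}_\gamma(b^\top\widehat{\phi}_c)=L^\gamma_{\mathrm{CIR}}(b)$, and by construction $\widehat{b}$ minimizes $L^\gamma_{\mathrm{CIR}}$. Hence $\widehat{f}$ already attains $\min_{b\in\R^k}\mathcal{L}_\gamma(b^\top\phi^*_c)$, and the remaining task is to show that this linear minimum equals the minimum over all of $\Le$.

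I would carry this out in two substeps. Substep (a) reduces to functions of $\phi^*(X)$ alone: in the DPA model $X$ is generated from $\phi^*(X)$ by a stochastic decoder whose noise is independent of $(\varepsilon,v)$, so $Y\indpt X\mid\phi^*(X)$ under every perturbation $v\in C^\gamma$. For an arbitrary $f\in\Le$ set $\tilde f(z):=\Ex[f(X)\mid\phi^*(X)=z]$; the conditional Jensen inequality together with this conditional independence gives $\Ex_v[(Y-\tilde f(\phi^*(X)))^2]\le\Ex_v[(Y-f(X))^2]$ for every $v$, hence $\mathcal{L}_\gamma(\tilde f\circ\phi^*)\le\mathcal{L}_\gamma(f)$, and we may assume $f$ is a function of $\phi^*(X)$. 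Substep (b) linearises the Bayes predictor: since $(\phi^*(X^v),Y^v)^\top=(\I-\Bb)^{-1}(\varepsilon+v)$ is an affine image of the (jointly) elliptical vector $\varepsilon+v$, the joint law is itself elliptical, so $\Ex_v[Y\mid\phi^*(X)]=\alpha_v+\beta_v^\top\phi^*(X)$ is affine in $\phi^*(X)$ for every $v\in C^\gamma$.

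To combine (a) and (b) I would use a saddle-point argument. The functional $\varphi(g,v):=\Ex_v[(Y-g(\phi^*(X)))^2]$ is convex in $g$ and linear (hence concave) in the law of $v$, and $C^\gamma$ is convex; restricting $g$ to a closed convex ball of $\Le$ that must contain any minimizer, Sion's minimax theorem yields $\inf_g\sup_v\varphi=\sup_v\inf_g\varphi$. By substep (b), the inner infimum is attained by $g(z)=\alpha_v+\beta_v^\top z$, so the saddle-point optimum is affine in $\phi^*$. \hyperref[assumptionintercept]{Assumption~\ref*{assumptionintercept}} is then used to certify that the optimal intercept is absorbed by the DRIG centering $Y^0_c,\widehat{Z}^0_c$, since it forces $\Ex[v]$ to act only through $Z$ and therefore makes the saddle-point intercept coincide with $\Ex[Y^0]-\tilde b^\top\Ex[\phi^*(X^0)]$; consequently the optimal affine predictor lies in $\{b^\top\phi^*_c:b\in\R^k\}$, and by the first paragraph it must equal $\widehat f$. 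The main obstacle is precisely this final step: verifying the hypotheses of Sion's theorem on the unbounded space $\Le$ (likely via a truncation/approximation argument) and using \hyperref[assumptionintercept]{Assumption~\ref*{assumptionintercept}} to confirm that observational-mean centering is the correct alignment between the saddle-point intercept and the DRIG estimator.
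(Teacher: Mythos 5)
Your proposal does not prove the statement at hand. Lemma~\ref{ahujaresult} is an identifiability claim: that exact reconstruction, under the constraint that the learned decoder is a polynomial of the same degree as the true one, forces the learned latents to be an invertible affine transformation of the true latents. What you have written is instead a proof sketch of Theorem~\ref{nonlinrob} (worst-case-risk optimality of $\widehat f$ over $\Le$): reduction to functions of $\phi^*(X)$, linearisation of the Bayes predictor under ellipticity, and an appeal to Proposition~\ref{driglemma}. None of this bears on identifiability. Worse, your opening step explicitly assumes the conclusion of Lemma~\ref{ahujaresult} (``by the identifiability result \ldots\ $\widehat\phi(X)=A\phi^*(X)+c$''), so read as a proof of that lemma the argument is circular. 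For reference, the paper itself supplies no proof either: it imports the result from \citet{ahuja2024interventionalcausalrepresentationlearning}, so there is no internal proof to match, but that does not make your argument a proof of the lemma.

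A genuine proof would have to run along entirely different lines. Writing the true decoder as $g$ (injective polynomial of degree $p$, latent support with nonempty interior) and the learned decoder as $\hat g$ (polynomial of degree $p$), the reconstruction identity gives $\hat g(\mathrm{enc}(g(z)))=g(z)$ for all $z$ in the support; setting $a:=\mathrm{enc}\circ g$ one obtains the polynomial identity $\hat g\circ a=g$ on a set with nonempty interior, hence on all of $\R^k$. One then argues that $a$ must itself be polynomial and, by comparing degrees on both sides of $\hat g\circ a=g$, that $a$ has degree one and is invertible, i.e.\ $\mathrm{enc}(g(z))=Az+c$ with $A$ invertible. The degree-counting step and the use of the nonempty-interior assumption are the actual mathematical content of the lemma, and neither appears anywhere in your proposal.
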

For a deeper overview of similar and stronger results, concerning even more general decoders, we refer the reader to \citet{khemakhem2020icebeemidentifiableconditionalenergybased, kivva2022identifiability, ahuja2024interventionalcausalrepresentationlearning, buchholz2023learninglinearcausalrepresentations}.  

\section{Proofs}\label{proofsapx}
\begin{proof}[Proof of Propositon \ref{driglemma}] 
    Similar to the proof of Theorem 2 in \citet{shen2023causalityoriented}, replacing $X$ with $\phi^*(X)$. 
    \\
    From the SCM, it is known that $$\phi^*(X^v) = (\I -\Bb)^{-1}_{1:k,\cdot} (\varepsilon + v) \enspace \text{as well as} \enspace Y^v = (\I -\Bb)^{-1}_{k+1,\cdot} (\varepsilon + v)$$
    and by centering it is clear that, $\phi_c(X^v) = N(\I -\Bb)^{-1}_{1:k,\cdot} (\varepsilon + v)=N\phi^*(X^v) $ while $Y^v_c=Y^v$. 
    \\
    Let us denote $Y^v - b^\top \phi_c(X^v) = \bigg((\I -\Bb)^{-1}_{1:k,\cdot} - b^\top N(\I -\Bb)^{-1}_{k+1,\cdot}\bigg) (\varepsilon + v) = w^\top (\varepsilon + v)$. Then, rewriting the expectation under $\Pp_v$ gives 
\begin{align}
    \Ex_v [Y - b^\top \phi_c(X)]^2 
    &= \Ex [w^\top (\varepsilon + v)]^2 \\
    &= w^\top \Ex [(\varepsilon + v)(\varepsilon + v)^\top] w \\
    &= \Ex_0 [Y - b^\top \phi_c(X)]^2 + w^\top \Ex[vv^\top] w
\end{align}
Taking the supremum over $v\in C^{\gamma}$ on both sides only influences the second expression of the right hand side 
\begin{align}
    \sup_{v \in C^{\gamma}} w^\top \Ex_v [vv^\top] w 
    &= w^\top \bigg( S^0 + \gamma \sum_{e\in \E} \omega^e  (S^e - S^0 + \mu^e {\mu^e}^\top ) \bigg) w \\
    &= \Ex[w^\top\delta^0]^2 + \gamma \sum_{e\in\E} \omega^e (\Ex[w^\top\delta^e]^2-\Ex[w^\top\delta^0]^2)
\end{align}
Noticing  that by independence of $\varepsilon,\delta^e$ and the fact that $\varepsilon$ is centered $\Ex_0 [Y - b^\top \phi_c(X)]^2 + \Ex[w^\top\delta^0]^2 = \Ex [Y^0 - b^\top \phi_c(X^0)]^2$ holds, as well as
\begin{align}
    \Ex[w^\top\delta^e]^2-\Ex[w^\top\delta^0]^2 
    &= \Ex[w^\top\delta^e]^2 + \Ex[\varepsilon^\top\delta^e]^2 - \Ex[\varepsilon^\top\delta^e]^2 -\Ex[w^\top\delta^0]^2  \\
    &= \Ex[Y^e - b^\top \phi_c(X^e)]^2 - \Ex[Y^0 - b^\top \phi_c(X^0)]^2 
\end{align}
for all environments $e\in \E$, finishes the proof.
\end{proof}

The following technical lemma studies a helpful expression combining affine transforms of a random variable and a conditional expectation in case of normal distribution.
\begin{definition}\label{spherical_elliptical}\emph{(spherically symmetric distribution, elliptical symmetric distribution)}
    \begin{itemize}
        \item  A random vector $Y\in \R^d$ follows a spherically symmetric distribution, if there exists a scalar function $\psi$, such that $\psi (u^\top u) = \Ex[e^{iu^\top Y}]$. In this case, we denote $Y\sim S_d(\psi)$.  
        \item  A random vector $X\in \R^d$ follows an elliptically symmetric distribution with parameters $\mu, \Sigma, \psi$, if $X = \mu + AY $ in distribution, for $\Sigma = AA^\top, \enspace A\in \R^{d\times k}, \enspace \text{rk}(A)=k$ and $Y$ following a spherical distribution with a scalar function $\psi$. In this case, we denote $X\sim E_d(\mu, \Sigma, \psi)$.
    \end{itemize}
\end{definition}
Further examples of elliptical distributions include symmetric multivariate Laplace, Kotz, and logistic distribution.For a better overview of their properties, we refer to \citet{fang1990symmetric}. 
\begin{lemma}\label{gausslemma} 
Let $X\in \R^d$, and $X\sim E_d(\mu, \Sigma, \psi)$. Then, for a full-rank matrix $M\in \R^{k \times d}$ with $k\leq d$, the conditional expectation $\Ex [X | MX]$ is affine in $MX$, i.e. there is a matrix $A\in \R^{d\times k}$ and a vector $c\in \R^d$ such that $$\Ex [X | MX] = AMX + c.$$

\end{lemma}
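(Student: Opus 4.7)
The plan is to reduce to the spherical case via the stochastic representation of elliptical distributions, and then exploit the sign-flip invariance of spherical distributions to show that the ``orthogonal complement'' directions contribute nothing to the conditional mean.

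By Definition \ref{spherical_elliptical}, write $X = \mu + AY$ for $Y \sim S_{k_0}(\psi)$ spherical and $A \in \R^{d \times k_0}$ of full column rank; I use $k_0$ for the dimension in the stochastic representation to avoid clashing with the $k$ appearing in the lemma's statement. Since $MX = M\mu + MAY$, the $\sigma$-algebras $\sigma(MX)$ and $\sigma(MAY)$ coincide, and linearity of conditional expectation gives
\begin{equation*}
\Ex[X \mid MX] \;=\; \mu + A\,\Ex[Y \mid MAY].
\end{equation*}
Hence it suffices to show that $\Ex[Y \mid BY]$ is affine in $BY$, where $B := MA \in \R^{k \times k_0}$.

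Next, take a singular value decomposition $B = P D Q^\top$ with $P, Q$ orthogonal, and set $\tilde Y := Q^\top Y$. By the rotation invariance of spherical laws, $\tilde Y$ has the same spherical distribution as $Y$. Letting $r := \operatorname{rank}(B)$ and $D_r$ denote the invertible $r\times r$ block of nonzero singular values, one checks $BY = PDQ^\top Y = \tilde P D_r \tilde Y_{1:r}$, where $\tilde P$ collects the first $r$ columns of $P$. Thus $BY$ is an invertible linear function of $\tilde Y_{1:r}$ on the image of $B$, so $\sigma(BY) = \sigma(\tilde Y_{1:r})$.

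The symmetry step is where the real content lies: the orthogonal map flipping the signs of the last $k_0 - r$ coordinates of $\tilde Y$ preserves its spherical law, so conditional on $\tilde Y_{1:r}$ the tail $\tilde Y_{r+1:k_0}$ is symmetric about zero, which yields $\Ex[\tilde Y_{r+1:k_0} \mid \tilde Y_{1:r}] = 0$. Combined with the trivial identity $\Ex[\tilde Y_{1:r} \mid \tilde Y_{1:r}] = \tilde Y_{1:r}$, this shows $\Ex[\tilde Y \mid BY]$ is linear in $\tilde Y_{1:r}$, hence affine in $BY$; propagating back through $Y = Q\tilde Y$ and $X = \mu + AY$ produces the required affine form in $MX$. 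The main obstacle I anticipate is essentially bookkeeping: separating the two overloaded uses of $k$, verifying the $\sigma$-algebra identity $\sigma(BY) = \sigma(\tilde Y_{1:r})$ carefully in the potentially degenerate case where $B$ is not full row rank, and making sure the spherical-generator $\psi$ carries through the conditioning so that the sign-flip symmetry is legitimately applied to the conditional distribution. A shorter but less self-contained alternative would be to note that $(X, MX)$ is jointly elliptical as an affine transform of $X$ and invoke the standard affine-regression property of elliptical vectors (e.g.\ \cite{fang1990symmetric}); the argument above reconstructs precisely the piece of that result we need.
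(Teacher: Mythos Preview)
Your argument is correct. You reduce to the spherical case via the stochastic representation, then use an SVD of $B=MA$ together with rotational invariance to isolate the first $r$ coordinates of $\tilde Y=Q^\top Y$; the sign-flip orthogonal map kills the conditional mean of the remaining coordinates, yielding linearity in $\tilde Y_{1:r}$ and hence affinity in $MX$. The bookkeeping you flag (the two $k$'s, the $\sigma$-algebra identity, existence of the conditional mean) is routine, and your worry about $\psi$ ``carrying through conditioning'' is unnecessary: the symmetry step only uses that the joint law of $\tilde Y$ is invariant under the sign flip, which follows directly from sphericity.

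The paper takes exactly the shortcut you mention at the end: it observes that $(X,MX)$ is jointly elliptical and quotes the standard conditional-mean formula from \cite{fang1990symmetric}, obtaining
\[
\Ex[X\mid MX]=\mu+\Sigma M^\top(M\Sigma M^\top)^{-1}(MX-M\mu).
\]
So the paper's proof is a one-line citation, whereas yours is a self-contained derivation of the same fact. The trade-off is that the paper's route immediately delivers the explicit coefficients $A=\Sigma M^\top(M\Sigma M^\top)^{-1}$ and $c=(\I-AM)\mu$, which are actually used downstream in the proof of Theorem~\ref{nonlinrob} (to argue that $(\I-AM)\mu_v=0$ under Assumption~\ref{assumptionintercept}); your proof establishes existence of \emph{some} affine map but does not produce this projection form without further work. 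Conversely, your SVD argument is more robust to rank deficiency of $B=MA$ (i.e.\ to $\Sigma$ not being full rank), a case in which the paper's inverse $(M\Sigma M^\top)^{-1}$ would need to be replaced by a pseudoinverse.
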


\begin{proof}[Proof of Lemma \ref{gausslemma}]
Since $X\sim E_d(\mu, \Sigma, \psi)$, it is clear that $MX \sim E_k(M\mu, M\Sigma M^\top, \psi)$, and also  
\begin{align}
    \begin{pmatrix}
        X \\
        M X
    \end{pmatrix} 
    \sim E_{d+k}\Bigg(
    \begin{pmatrix}
        \mu \\
        M \mu
    \end{pmatrix}, 
    \begin{pmatrix}
        \Sigma  & \Sigma M ^\top \\
        M \Sigma & M \Sigma M^\top
    \end{pmatrix}, \psi
    \Bigg).
\end{align}
Furthermore, similar to the Gaussian case, the conditional $X|MX$ also follows an elliptically symmetric distribution and its expectation is known \citep{fang1990symmetric}, 
\begin{align}
    \Ex [X | MX] &= \mu +\Sigma M^\top (M \Sigma M^\top)^{-1} M(X-\mu) \\
    & = \Sigma M^\top (M \Sigma M^\top)^{-1} MX + (\I - \Sigma M^\top (M \Sigma M^\top)^{-1} M)\mu,
\end{align}
which is an affine function of $MX$.
\end{proof}

\begin{corollary}
    In case $X$ is a centered random variable as above, then $\Ex[X| MX]$ is a linear function of $MX$.
\end{corollary}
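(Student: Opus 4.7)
The plan is to prove $\mathcal{L}_\gamma(\widehat{f}) = \min_{f\in \Le}\mathcal{L}_\gamma(f)$ in three stages. First, I would reduce $\widehat f$ to a linear model in $\phi^*_c$ using affine identifiability, so that Proposition~\ref{driglemma} pins down $\mathcal{L}_\gamma(\widehat f)$. Second, I would argue that any $f\in\Le$ can be replaced by a function of $\phi^*$ alone without increasing $\mathcal{L}_\gamma$. Third, I would use the elliptical assumption together with Assumption~\ref{assumptionintercept} to show that among functions of $\phi^*$, the optimal predictor is itself linear and therefore coincides with $\widehat f$.

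\emph{Stage 1 (affine reduction).} By affine identifiability (Lemma~\ref{Kivvalemma1}), $\widehat\phi = N\phi^* + m$ for some invertible $N\in\R^{k\times k}$ and $m\in\R^k$. Centering yields $\widehat\phi_c = N\phi^*_c$, so $\widehat f(X) = \widehat b^\top N \phi^*_c(X) = \widetilde b^\top \phi^*_c(X)$ with $\widetilde b := N^\top\widehat b$. Since $N$ is invertible, the bijection $b\mapsto N^\top b$ preserves minimization, so $\widetilde b$ minimizes the DRIG objective written directly in terms of $\phi^*_c$. Applying Proposition~\ref{driglemma} with $\phi=\phi^*$ (i.e.\ $N=I$, $m=0$) gives
\begin{equation*}
    \mathcal{L}_\gamma(\widehat f) \;=\; \min_b \mathcal{L}_\gamma(b^\top\phi^*_c).
\end{equation*}

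\emph{Stage 2 (sufficiency of $\phi^*$).} Under the SCM, $Y^v$ depends on $X^v$ only through $\phi^*(X^v)$, so $Y^v\perp X^v\mid\phi^*(X^v)$; moreover the conditional law $X^v\mid\phi^*(X^v)$ is determined by the decoding mechanism, which does not vary with $v$. For $f\in\Le$, set $g(z):=\Ex_0[f(X^0)\mid\phi^*(X^0)=z]$. Writing $f(X^v)=g(\phi^*(X^v))+(f(X^v)-g(\phi^*(X^v)))$ and conditioning on $\phi^*(X^v)$, the cross term in the squared expansion vanishes by conditional independence combined with the invariance $\Ex_v[f(X^v)\mid\phi^*(X^v)]=g(\phi^*(X^v))$. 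Hence
\begin{equation*}
    \Ex_v[(Y^v-f(X^v))^2] \;=\; \Ex_v[(Y^v-g(\phi^*(X^v)))^2] + \Ex_v[(g(\phi^*(X^v))-f(X^v))^2] \;\geq\; \Ex_v[(Y^v-g(\phi^*(X^v)))^2],
\end{equation*}
so $\mathcal{L}_\gamma(g\circ\phi^*)\leq\mathcal{L}_\gamma(f)$, reducing the problem to minimizing over $g\in\Le(\R^k)$.

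\emph{Stage 3 (linear optimality).} Writing $M_1,M_2$ for the relevant rows of $(\I-\Bb)^{-1}$, the SCM gives $\phi^*(X^v)=M_1(\varepsilon+v)$ and $Y^v=M_2(\varepsilon+v)$. Ellipticity of $\varepsilon,v$ in a common scale family makes $\varepsilon+v$ elliptical, so Lemma~\ref{gausslemma} yields $\Ex_v[Y^v\mid\phi^*(X^v)]=A_v\phi^*(X^v)+c_v$; Assumption~\ref{assumptionintercept} kills the intercept, $c_v=0$. Thus the pointwise Bayes predictor is linear in $\phi^*$, so
\begin{equation*}
    \min_{f\in\Le}\Ex_v[(Y^v - f(X^v))^2] \;=\; \min_b\Ex_v[(Y^v - b^\top\phi^*_c(X^v))^2] \;=\; \mathrm{Var}_v[Y^v\mid\phi^*(X^v)]
\end{equation*}
holds for every $v\in C^\gamma$. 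A Sion-type minimax exchange (the loss is convex in $f$ or $b$ and linear in the law of $v$; $C^\gamma$ is convex and tight under the uniform second-moment constraint) then gives
\begin{equation*}
    \min_{f\in\Le}\sup_{v\in C^\gamma}\Ex_v[(Y^v-f(X^v))^2] \;=\; \sup_v\min_f \;=\; \sup_v\min_b \;=\; \min_b\sup_{v\in C^\gamma}\Ex_v[(Y^v-b^\top\phi^*_c(X^v))^2],
\end{equation*}
closing the chain $\min_{f\in\Le}\mathcal{L}_\gamma(f) = \min_b\mathcal{L}_\gamma(b^\top\phi^*_c) = \mathcal{L}_\gamma(\widehat f)$.

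\emph{Main obstacle.} Stage 3 is the crux. Two technical points need care: (a) preservation of the elliptical family under the independent sum $\varepsilon+v$, which is automatic for Gaussian but in general requires a common scale-mixture structure; and (b) the topological hypotheses needed to invoke Sion's theorem on $C^\gamma$. A fallback route that sidesteps Sion is to substitute the decomposition $\Ex_v[(Y^v-g(\phi^*(X^v)))^2]=\mathrm{Var}_v[Y^v\mid\phi^*(X^v)]+\Ex_v[(A_v\phi^*(X^v)-g(\phi^*(X^v)))^2]$ into $\sup_{v\in C^\gamma}$ and match the worst case term-by-term against the explicit quadratic form in $b$ supplied by Proposition~\ref{driglemma}, thereby showing directly that no nonlinear $g$ can beat the DRIG linear minimizer.
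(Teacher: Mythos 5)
Your proposal does not address the statement it was supposed to prove. The statement is the short corollary to Lemma~\ref{gausslemma}: if $X$ is a \emph{centered} elliptical random vector, then $\Ex[X \mid MX]$ is a \emph{linear} (not merely affine) function of $MX$. What you have written instead is a proof sketch of Theorem~\ref{nonlinrob}, the main robustness result of the paper. Whatever the merits of that sketch, it is a proof of a different claim, so as a proof of the corollary it is simply off target.

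The corollary itself requires only one line. Lemma~\ref{gausslemma} gives, for $X \sim E_d(\mu, \Sigma, \psi)$,
\begin{equation*}
\Ex[X \mid MX] = \Sigma M^\top (M\Sigma M^\top)^{-1} M X + \bigl(\I - \Sigma M^\top (M\Sigma M^\top)^{-1} M\bigr)\mu .
\end{equation*}
If $X$ is centered then $\mu = 0$, the intercept term vanishes, and $\Ex[X \mid MX] = \Sigma M^\top (M\Sigma M^\top)^{-1} M X$ is linear in $MX$. That is the entire argument; no minimax exchange, no affine identifiability, and no appeal to Assumption~\ref{assumptionintercept} is needed.
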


\begin{remark}
    The matrix $\Sigma M^\top (M \Sigma M^\top)^{-1} M$ is an (oblique) projection onto the row space of $M$, transformed by $\Sigma$. 
\end{remark} 
\begin{proof}[Proof of Theorem \ref{nonlinrob}]
\begin{align}
    \sup_{v\in C^\gamma} \Ex_v [Y - \widehat{b}^\top  \widehat{\phi}_c(X)]^2
    &=  \min_{\substack{b \in \R^k}} \enspace \sup_{v\in C^\gamma} \Ex_v [Y - b^\top N \phi^*(X)]^2  \\
    & = \min_{\substack{b \in \R^k}} \enspace \sup_{v\in C^\gamma} \Ex_v [Y - b^\top \phi^*(X) ]^2  
\end{align}
Now consider the minimizer of $\Ex_v [Y-f(X)]^2$, namely $\Ex_v [Y|X]$. Since $X,Y$ are $d$-separated by $\phi^*(X)$ in the SCM, we see how $\Ex_v [Y|X] = \Ex_v [Y|X, \phi^*(X)] = \Ex_v [Y|\phi^*(X)]$ and moving forward, it becomes clear that 
\begin{align}
    \Ex_v [Y|\phi^*(X)] & = (\I-\Bb)^{-1}_{k+1,\cdot} \Ex \bigg[\varepsilon + v \bigg| (\I-\Bb)^{-1}_{1:k,\cdot} (\varepsilon + v)\bigg] \\
  \text{by Lemma \ref{gausslemma}}
    & = (\I-\Bb)^{-1}_{k+1,\cdot} A M(\varepsilon + v) + (\I-\Bb)^{-1}_{k+1,\cdot}(\I - A M)\mu_v, 
\end{align} 
for $A=\Sigma M^\top (M \Sigma M^\top)^{-1}, M=(\I - \Bb)^{-1}_{1:k,\cdot}, \Sigma = \Sigma_\varepsilon + \Sigma_v 
$.   
$AM$ is a projection matrix projecting onto the row-space of $M$, transformed by $\Sigma$. In case $\mu_v$ can be represented as $\Sigma M^\top \alpha$ for some $\alpha \in \R^k$, then it is clear that $(\I - AM)\mu_v=0$ is
always the case. 
Hence, $\Ex_v[Y|X]$ is a linear function of $\phi^*(X)$ as as whole, it is true that for some $a\in \R^{k}$ 
\begin{align}
    \min_{f\in \Le}\Ex_v [Y-f(X)]^2  & = \Ex_v \bigg[Y-\Ex_v [Y|X]\bigg]^2 \\
    & = \Ex_v [Y - a^\top \phi^*(X)]^2 \\
    & \geq  \min_{b \in \R^k} \Ex_v [Y - b^\top \phi^*(X) ]^2
\end{align}
The reverse inequality between the two is always true. 
Since for any $v\in C^{\gamma}$ $$\min_{f\in \Le}\Ex_v [Y-f(X)]^2 = \min_{b\in \R^k} \Ex_v [Y - b^\top \phi^*(X)]^2,$$ it also holds $$\min_{f\in \Le}\supv_v [Y-f(X)]^2 = \min_{b\in \R^k} \supv_v [Y - b^\top \phi^*(X)]^2.$$
\end{proof}

\newpage

\section{Hyperparameters} \label{hyperparams}
The architecture of the ERM, IRM networks, including depth, width, and hyperparameters such as batch normalization, learning rate, dropout, and the number of training epochs, was designed to match the corresponding networks in DPA. 
During training, the models were fed batches of $(X,Y)$ pairs, and the network predicted $Y$ directly from $X$. Hyperparameters that have not been changing: width was generally taken to be 400 units in all networks as well as a batch norm. In the DPA all networks had the same depth of two in experiments with synthetic datasets and four for studies on single-cell data. Dropout is also included where it boosts performance, in these cases, only within the ERM, with $p=0.25$ and 0.5 for synthetic and single-cell studies, respectively. All experiments used Adam for optimization, with a learning rate $10^{-4}$ and $\alpha=10^{-1}$ . The latent dimension was chosen considering a point of type \emph{elbow}, after which there was no significant performance gain in terms of the objective function $L$. 
Most experiments do not require significant memory to run as the models are essentially common neural networks and using a GPU usually take between 7 and 12 minutes for 1000 epochs of training. 

\section{Synthetically generated Data}\label{generatingscheme}
It is often helpful and insightful to test new methods and theories in controlled settings, for example, in simulated or lab environments, to ensure that everything works as expected. Let $X\in \mathbb{R}^d$, let $Z\in \mathbb{R}^k$, and let $Y$ be a real variable. As before, it is implicitly assumed that $k\leq d$, as well as $d\geq 2$. Let $\phi^*: \mathbb{R}^d\rightarrow \mathbb{R}^k$ denote the true encoder function. The aim is to generate data from environments according to the causal graph similar to the one in subsection \ref{Setting}. The true encoder function in synthetic experiments is chosen as a polynomial, parametrized through a random coefficient matrix which adheres to the constraints described in \citet{ahuja2024interventionalcausalrepresentationlearning} section 4. 

That is, $X$ admits a lower dimensional representation $Z = \phi^*(X)$, which is a nonlinear function of $X$, while $Y$ itself is a linear function of $Z$. For an environment $e\in\E$, the corresponding modified version of the structural equations as noted in equation \ref{scm2} is
\begin{equation}
\begin{pmatrix}
\phi^*(X^e)\\[\jot]
Y^e
\end{pmatrix} = \textbf{B} \begin{pmatrix}
 \phi^*(X^e)\\[\jot] Y^e
\end{pmatrix} + \varepsilon + \delta^e
\end{equation}
where $e\in \E$ and $e=0$ denotes the observational environment and $\delta^e$ denotes the intervention which is independent of $\varepsilon$. Analogously, the test setting, for $v$ also independent of $\varepsilon$, as in equation \ref{scm3}. 
\begin{equation}
\begin{pmatrix}
\phi^*(X^v)\\[\jot]
Y^v
\end{pmatrix} = \textbf{B} \begin{pmatrix}
 \phi^*(X^v)\\[\jot] Y^v
\end{pmatrix} + \varepsilon + v
\end{equation}
To begin, a random directed graph $G$ is generated, with edge probabilities of $\frac{1}{2}$. To enforce the acyclic condition, the edges of the random graph $G$ are filtered to eliminate cycles. In particular, the new graph will only include edges where the source node has a \textit{higher} index than the target node. The remaining edges are then assigned a weight drawn from the normal distribution. Let $\mathbf{B}$ denote the adjacency matrix of this directed acyclic graph (DAG) as above, and let $\textbf{C}$ denote its inverse $(\mathbf{I} - \mathbf{B})^{-1}$. Let $b$ denote the first $k$ components of the $(k+1)$-st row, $\mathbf{B}_{k+1,1:k}$. 
\\
Next, random positive definite matrices with norm 1 are generated to serve as the covariance matrices $\Sigma_e$ of the interventions $\delta^e$ in each environment, as well as normalised random vectors $\mu_e$ to be used as their means. The second moment of the test intervention $v$ is then derived from the covariance matrices and means of the individual interventions 
$$\Xi_{\eta} = \frac{\eta}{|\E|} \sum_{e\in\E} \Big( \Sigma_e + \mu_e \mu_e^\top\Big)$$
and is controlled by the parameter $\eta$, which is given as input. The covariance matrix of $\varepsilon$ is generated similarly to those of $\delta^e$. The role of $\eta$ is to control the strength of perturbation within the test set. 
\\
Furthermore, the means and covariance matrices are used to sample $\varepsilon$ as well as $ \delta^e$ from the normal distribution, for each environment $e$. The data points are subsequently generated as 
\begin{equation}
\begin{pmatrix}
Z^e\\[\jot]
Y^e
\end{pmatrix} = \textbf{C} (\varepsilon + \delta^e).
\end{equation}
Lastly, $X^e$ is obtained by plugging $Z^e$ into the specified latent function. The latent function can be taken as a polynomial of given degree, with some constraints on the dimensionalities of $X$ or $Z$ as mentioned in Ahuja et al. \citet{ahuja2024interventionalcausalrepresentationlearning}. 
\\
Alternatively, the latent function can also be chosen as an initialised ReLU network of given width and depth to cosplay as a nonlinear function. Training data points are collected, as well as their labeled environments encoded in $E$, and are divided in batches to be used in training. Concerning the data for the test environment, $\varepsilon$ is sampled analogously as in the training setting, but $v$ is sampled as a Gaussian with mean $\mu_v$ and covariance $\Xi_{\eta} - \mu_v \mu_v ^\top$
where $\mu_v$ can be given as input or it defaults to $\mu_v = \frac{\eta}{|\E|} \sum_{e\in \E} \mu_e$. Also similarly to the train setting, the data points are subsequently generated as
\begin{equation}\label{distr_node}
\begin{pmatrix}
Z^v\\[\jot]
Y^v
\end{pmatrix} =\textbf{C} (\varepsilon + v).
\end{equation}
and $X^v$ is obtained after plugging $Z^v$ into the same latent function. 
Furthermore, in case of student's $t$-distribution, the vector $\varepsilon+v$ is derived with one additional step before multiplication by $\textbf{C}$. Taking $\varepsilon+v$ generated as above, denote it by $\zeta$. Since $\zeta$ is Gaussian, the vector $$\zeta':= \zeta_c \cdot \sqrt{\frac{\nu}{u}} + \text{mean}_\zeta, \enspace u\sim \chi_\nu^2,$$ where $\zeta_c$ is simply $\zeta$ centered, follows a multivariate student's $t$-distribution with $\nu$ degrees of freedom. By abuse of notation, denote $\zeta'$ by $\varepsilon+v$ and plug it into equation \ref{distr_node}.

Additionally, since the same number of observations is drawn from each environment, the weights $\omega^e$ simplify to the uniform weights in this context. There is also an option to consider the setting where one excludes any intervention on the target variable $Y$, this would mean that the means and covariance matrices of $\delta^e,v$ are generated with a last entry, respectively, row being zero, and the matrices are in that case only positive semidefinite.

The submitted code also contains an oracle estimator, namely the MSE of DRIG applied on the true latent variables, which is considered to be the theoretically optimal achievable DRIGs MSE in case of perfect reconstruction. 
Furthermore, the experimental data was drawn using perturbation strength $\eta=10$. 

\section{Further plots of experiments}
\begin{figure*}[h]
    \centering
    \includegraphics[width=0.49\textwidth, height=4.3cm]{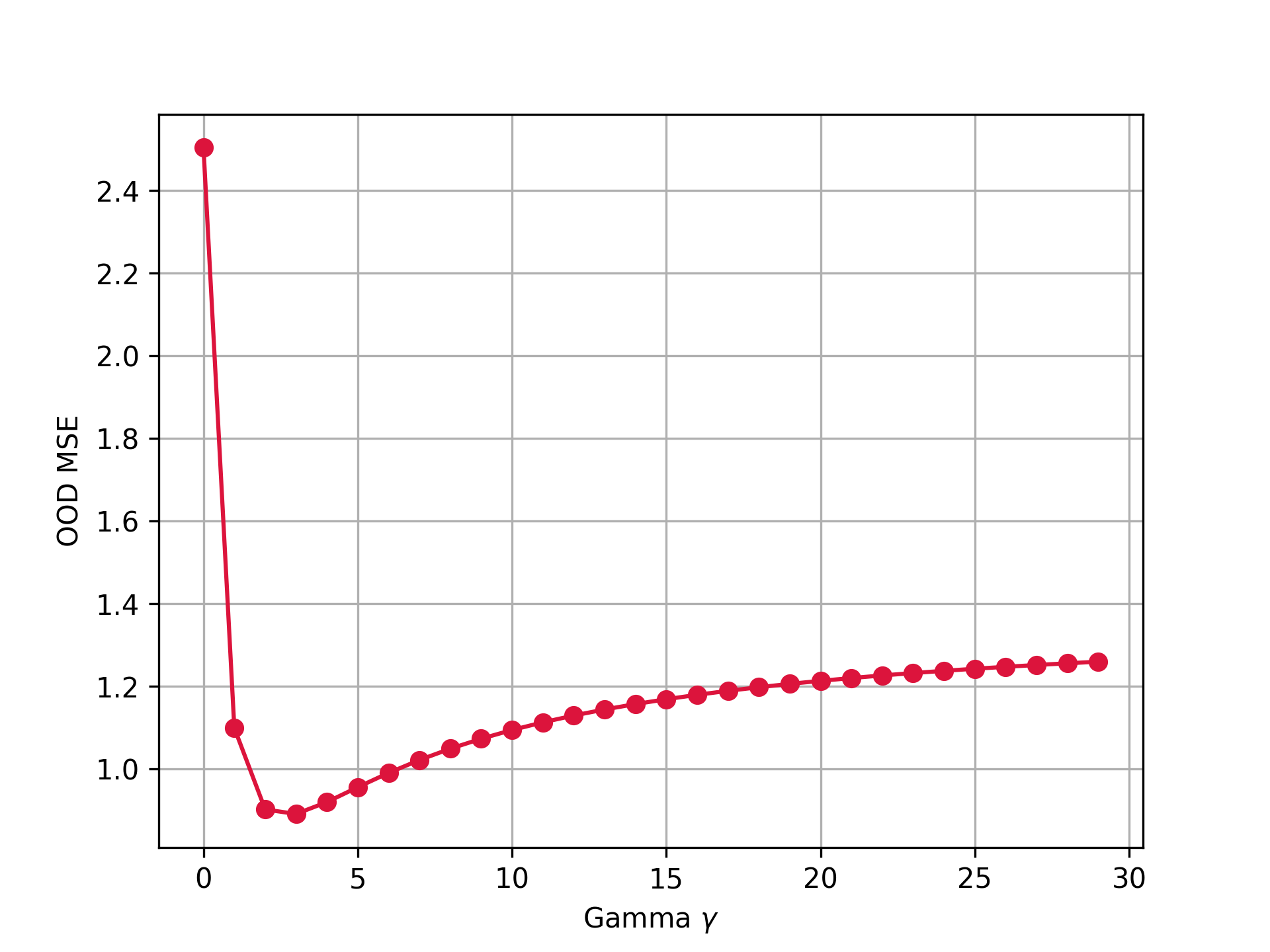}
    \includegraphics[width=0.49\textwidth, height=4.3cm]{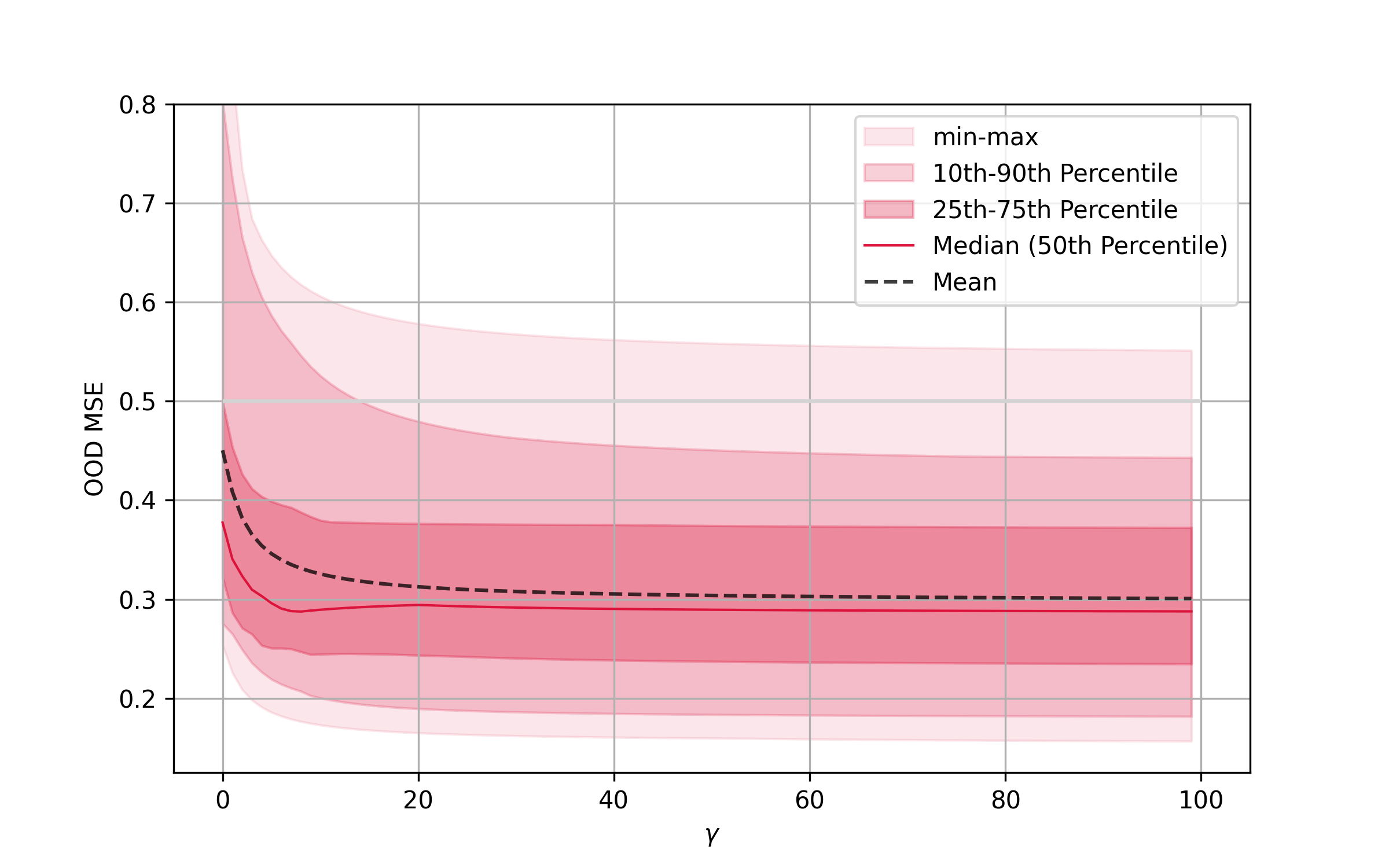}
\caption{Synthetically generated dataset of latent dimension two (left) \& single-cell dataset (right) - the 
panels illustrate the evolution of OOD MSE error of the proposed model in terms of chosen robustness radius $\gamma$. Notably, in the first case ,the \emph{finite} nature of the perturbation is clear, as the performance degrades for overly conservative values of $\gamma$. This occurrence is less clear in the second case, but still visible in the median. For the right panel, from top to bottom in shades of red: maximum, 90th, 75th quantile, median, 25th, 10th quantile, and minimum of the MSE across test environments. The black dashed line represents the mean.}\label{fig:testenvsplotqq}
\end{figure*}
The GMM regularization is partly motivated by the results of \citet{kivva2022identifiability}, who rely on the framework to generating the latents from a GMM. It can also be interpreted as a way to ensure a common invariant structure among the environments. For an ablation study of $\alpha$, consider the figure below.  
\begin{figure}[ht]\label{fig:alpha_ablations}
\centering
    \includegraphics[width=0.49\columnwidth]{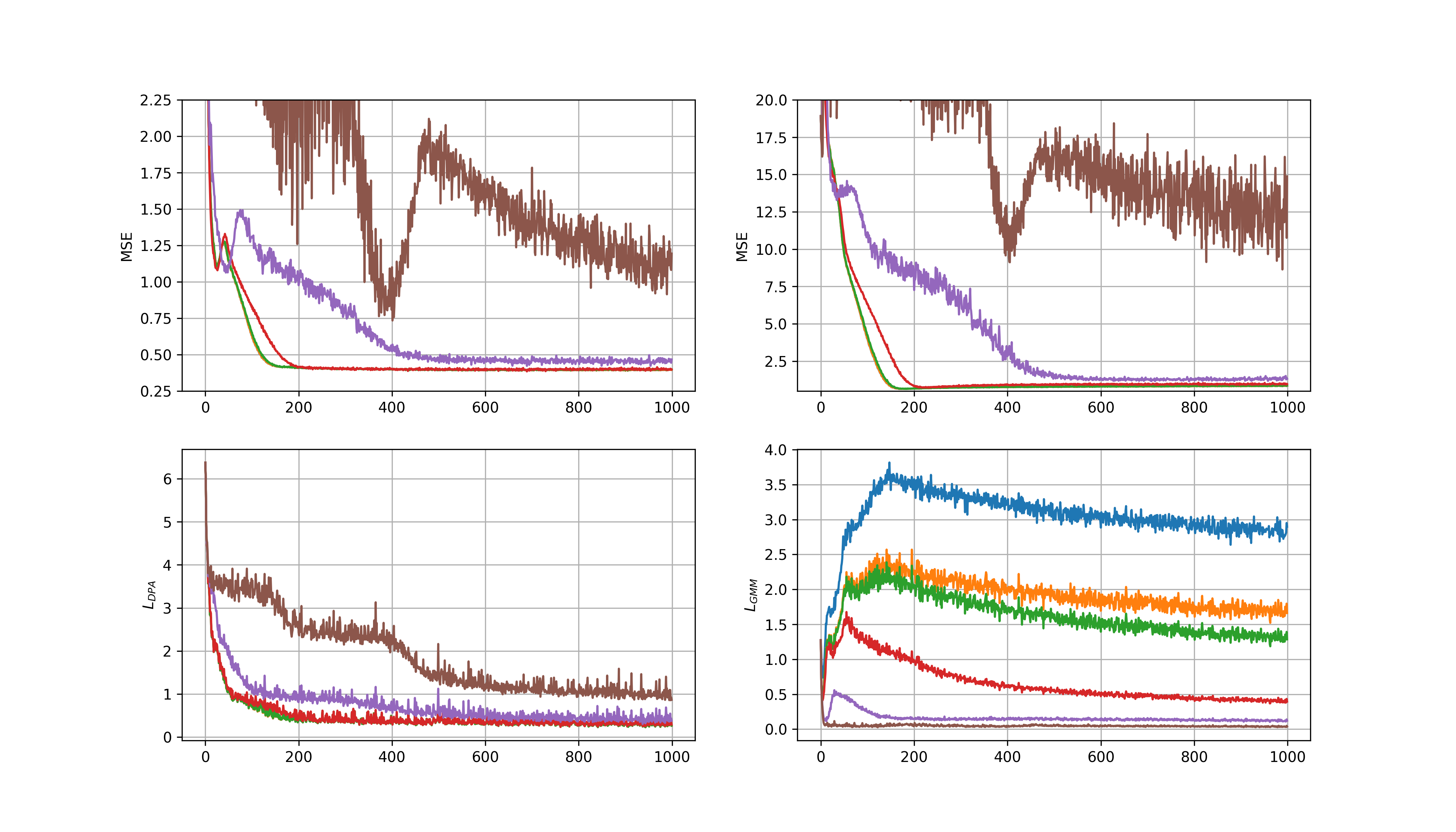}
    \includegraphics[width=0.49\columnwidth]{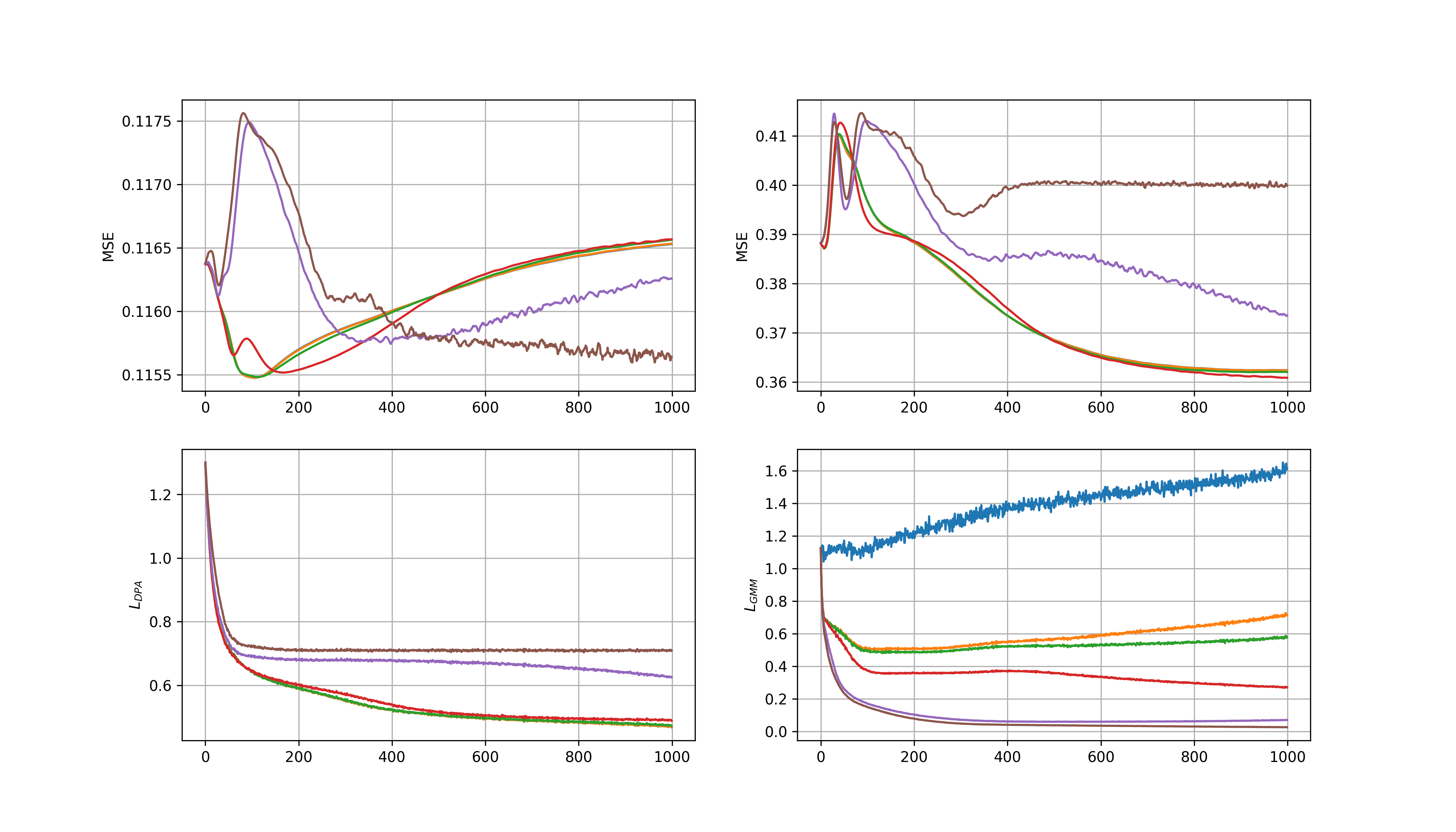}
\caption{Simulated (left), single-cell (right), learning curves over 1000 epochs for different values of $\alpha \in \{ 0, \frac{1}{1000}, \frac{1}{100}, \frac{1}{10}, 1, 10\}$ colored blue, orange, green, red, purple, brown, respectively. Considering only the lower pairs of plots, it is evident that $\alpha=\frac{1}{10}$ (red) achieves the best trade-off among the selected values for the optimized loss $L$. The upper row depicts performance in terms of training and test MSE, respectively.}
\end{figure}

\end{document}